\def\eqref#1{equation~\ref{#1}}
\def\1{\bm{1}}
\DeclareMathAlphabet{\mathsfit}{\encodingdefault}{\sfdefault}{m}{sl}
\SetMathAlphabet{\mathsfit}{bold}{\encodingdefault}{\sfdefault}{bx}{n}
\newcommand{\E}{\mathbb{E}}
\newcommand{\R}{\mathbb{R}}
\newcommand{\KL}{D_{\mathrm{KL}}}
\DeclareMathOperator*{\argmin}{arg\,min}
\newcommand{\EgalUCB}{\texttt{EgalUCB}}
\newcommand{\EgalMAB}{\texttt{EgalMAB}}
\newcommand{\dd}{\mathrm{d}}
\newcommand{\I}{\mathbb{I}}
\newcommand{\N}{\mathbb{N}}
\renewcommand{\P}{\mathbb{P}}
\newcommand{\V}{\mathcal{V}}
\newcommand{\EventE}{\mathcal{E}}
\newcommand{\EventF}{\mathcal{F}}
\newcommand{\EventG}{\mathcal{G}}
\newcommand{\EventH}{\mathcal{H}}
\newcommand{\Aminus}{A^{-}}
\newcommand{\BoGamma}{\mathbf{\Gamma}}
\newcommand{\Deltamin}{\Delta_{\mathrm{min}}}
\newcommand{\Deltamax}{\Delta_{\mathrm{max}}}
\newcommand{\comp}{\mathsf{c}}
\newcommand{\powerset}{\mathcal{P}}
\newcommand{\ind}{\mathrm{ind}}
\newcommand{\subseteqU}{\subseteq_{\scriptscriptstyle{U}}}
\newcommand{\UCB}{\mathrm{UCB}}
\newcommand{\UCBi}{\texttt{UCB1}}
\definecolor{grey}{HTML}{BBBBBB}
\definecolor{red}{HTML}{EE6677}
\definecolor{green}{HTML}{228833}
\definecolor{blue}{HTML}{4477AA}
\title{Stochastic Bandits for Egalitarian Assignment}
\author{\name Eugene Lim \email elimwj@comp.nus.edu.sg \\
      \addr Department of Computer Science, \\ National University of Singapore
      \AND
      \name Vincent Y.~F. Tan \email vtan@nus.edu.sg \\
      \addr Department of Mathematics, \\ Department of Electrical and Computer Engineering, \\ National University of Singapore
      \AND
      \name Harold Soh \email harold@comp.nus.edu.sg\\
      \addr Department of Computer Science, \\ National University of Singapore}
\begin{document}

\maketitle

\begin{abstract}
We study \texttt{EgalMAB}, an egalitarian assignment problem in the context of stochastic multi-armed bandits. In \texttt{EgalMAB}, an agent is tasked with assigning a set of users to arms. At each time step, the agent must assign exactly one arm to each user such that no two users are assigned to the same arm. Subsequently, each user obtains a reward drawn from the unknown reward distribution associated with its assigned arm. The agent's objective is to maximize the minimum expected cumulative reward among all users over a fixed horizon. This problem has applications in areas such as fairness in job and resource allocations, among others. We design and analyze a UCB-based policy \texttt{EgalUCB} and establish upper bounds on the cumulative regret. In complement, we establish an almost-matching policy-independent impossibility result.
\end{abstract}

\section{Introduction}

The multi-armed bandit (MAB) problem serves as a model for online decision-making under uncertainty, finding applications in diverse domains~\citep{shen2015,durand2018,ding2019,mueller2019,forouzandeh2021}. In the classical stochastic MAB problem, an agent is provided with a set of $K$ arms, each associated with an unknown distribution. At each round, the agent plays an arm and receive a reward drawn from its distribution. The agent's goal is to maximize the expected cumulative reward obtained over a fixed number of time steps $T$.

In our work, we study the problem of {\em egalitarian assignment} in the context of stochastic MABs, which we refer to as \EgalMAB{}. In this scenario, the agent is provided with a set of $U<K$ users. At each time step, the agent must assign exactly one arm to each user such that no two users are assigned to the same arm. Subsequently, each user obtain a reward drawn from the reward distribution associated with its assigned arm. The agent's objective is to \emph{maximize the minimum} expected cumulative reward among all users.

\EgalMAB{} finds applications in various domains, including ensuring fairness in job and resource allocations. Consider the job assignment problem depicted in Figure~\ref{fig:egalmab-intro}. In this scenario, there are $U$ users with recurring jobs of equal load (e.g., hourly database updates) and $K$ shared cloud computing resources with fluctuating computational power (e.g., due to unrelated loads that are running on neighbouring cores of the same physical node~\citep{kousiouris2011}). The agent's objective is to distribute these jobs \emph{fairly} among the available resources such that over $T$ trials, the user who has to wait the longest across all their jobs (i.e., receives the least cumulative reward) is not significantly worse off compared to other users.

Similarly, ride-hailing services present another scenario where fair assignment is desirable. In this scenario, there are $U$ passengers and $K$ drivers who offer varying degrees of experience to passengers (e.g., due to vehicle condition and driver behavior). The agent's objective is to allocate users \emph{fairly} to vehicles such that over $T$ trips, no passenger faces a significantly worse overall experience than others.

\begin{figure}[t]
    \centering
    \includegraphics[width=0.6\columnwidth]{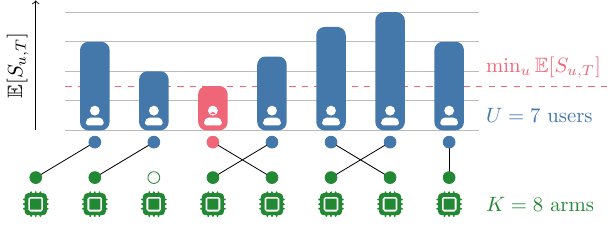}
    \caption{An illustration with \textcolor{green}{$K=8$ arms} and \textcolor{blue}{$U=7$ users}. After time step $T$, each user $u\in[U]$ has some expected cumulative reward $\E[S_{u,T}]$. The agent's objective is to maximize \textcolor{red}{$\min_{u\in[U]} \E[S_{u,T}]$}, which is the minimum expected cumulative reward across all users.}
    \label{fig:egalmab-intro}
\end{figure}

As illustrated in both examples, \EgalMAB{} embodies the principle of egalitarianism, which is also known as the Rawlsian maximin principle~\citep{rawls1971}, in its notion of fairness. Egalitarianism is a fundamental notion of justice based on the \emph{difference principle}, which seeks to \emph{maximize} the welfare of those in society who are the \emph{worst-off}. Likewise, our agent's objective is to maximize the cumulative reward of the user receiving the lowest cumulative reward among all $U$ users. Given our examples above, solving the \EgalMAB{} problem yields a policy that minimizes the overall waiting time of the user who has to wait the longest and maximizes the experience of the passenger who has the most negative encounters. This is in contrast to the classic MAB setting where only \emph{overall utility} is considered. This prompts the fundamental question that our work seeks to answer:
\begin{quote}
    \emph{How can we design an agent's assignment policy that optimizes overall utility for individual users while also ensuring that no user within a group consistently encounters substantially sub-optimal outcomes?}
\end{quote}

Our contributions are summarized as follows. We formally define the \EgalMAB{} problem in Section~\ref{sec:problem} and propose \EgalUCB{}, a UCB-based solution to \EgalMAB{}, in Section~\ref{sec:policy}. In Section~\ref{sec:mainresults}~and~\ref{sec:analysis}, we establish that \EgalUCB{} achieves an expected regret of at most
\begin{equation*}
    O\Biggl(\sqrt{\frac{T\ln(T)\cdot(K-U)}{U}}\Biggr).
\end{equation*}
We also provide a policy-independent lower bound that matches the upper bound up to a multiplicative factor of $\sqrt{1/U}$ and a term logarithmic in $T$. Proof sketches for these results are included in Section~\ref{sec:analysis}. Lastly, empirical validations for these results using both synthetic and real-world data are presented in Section~\ref{sec:experiments}.


\section{Related Works}

\paragraph{MAB with Multiple Plays.} The multiple-play multi-armed bandit (MP-MAB) problem~\citep{anantharam1987,gai2012,chen2016,komiyama2015} expands upon the classical MAB framework by allowing the agent to play $U\geq2$ distinct arms. The MP-MAB problem has been extended in various ways, including combinatorial bandits~\citep{cesa2012,kveton2015a,chen2016}, cascading bandits~\citep{kveton2015b,zheng2017}, and MP-MAB with shareable arms~\citep{wang2022}. An adjacent problem to ours is identifying the top $U$ arms while minimizing regret, which can be framed as a matroid bandit problem \citep{kveton2014} with a uniform matroid of rank $U$. Although similar to \EgalMAB{} in that the agent has to select multiple arms, unlike \EgalMAB{}, these formulations of the MP-MAB problem do not involve \emph{multiple users}; that is, the reward is with respect to the agent instead of users.

\paragraph{MAB with Multiple Users.} The extension of multiple users into the classic MAB problem has been extensively explored in the context of cognitive radio networks~\citep{wassim2009,liu2010,avner2014}. Unlike \EgalMAB{} where a \emph{centralized} agent assigns the arms to the users, these works have focused on scenarios where multiple \emph{decentralized} users interact with a single MAB instance. As a result of this decentralization, it is possible for multiple users to play the same arm simultaneously, resulting in a collision that can negatively affect the received reward.

\paragraph{Fairness in MAB.} The introduction of fairness considerations into the classical MAB problem has garnered significant interest. Much attention has been directed towards addressing fairness concerns that typically involve ensuring that each arm is played a minimum number of times, known as fairness in exposure~\citep{claure2020,chen2020,li2020,wang2021}, or with a probability proportional to the arm's merit, known as meritocratic fairness~\citep{joseph2016,joseph2018}. While \EgalMAB{} emphasizes fairness among users, the aforementioned works deal with fairness among arms. However, there are alternative approaches that maintain a focus on fairness among users. One such approach involves maximizing the Nash social welfare (NSW) function. This function is defined as the product of rewards obtained by all users, which intuitively encodes the notion of fairness as it increases only when most users achieve high rewards. \citet{hossain2021} investigated a scenario involving $U$ users and $K$ arms, in which each arm's reward varies for each user due to different perceived utilities. In each time step, only one arm is played, and all users obtain rewards according to their perceived utilities. Their primary objective is to maximize the NSW. \citet{sawarni2023} examined an alternative fairness framework within the context of linear bandits. In their scenario, a new user is introduced at each time step, and fairness is ensured by considering the product of rewards across all time steps. Both of these works assume that only a \emph{single} arm is played at each time step, while the agent in \EgalMAB{} simultaneously selects \textit{multiple} arms. Additionally, instead of aiming to maximize the \textit{NSW function}, \EgalMAB{} focuses on maximizing the \textit{cumulative reward of the worst-off user}.

\begin{figure*}[t]
    \centering
    \includegraphics[width=0.7\textwidth]{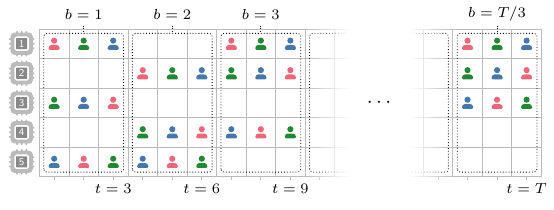}
    \caption{A trace of \EgalUCB{} with $K=5$ arms and $U=3$ users. When $b=2$, the three arms with the highest $\UCB_{a,b}$ values are $a\in\{2,4,5\}$, which are then assigned in a round-robin fashion across time steps $t\in\{4,5,6\}$. As $b$ increases, the estimates for $\mu_{a}$ for all arms $a$ improves. By $b=T/3$ for large $T$, the three arms with the highest $\UCB_{a,b}$ values are most likely $a\in\{1,2,3\}$, which are then assigned over time steps $t\in\{T-2,T-1,T\}$.
    }
    \label{fig:egalucb-illustrated}
\end{figure*}


\section{\EgalMAB{} Problem}
\label{sec:problem}

In this section, we will formally define the components involved in an \EgalMAB{} problem. We will start by presenting a working definition for each component. When relevant, we will supplement these definitions with the measure-theoretic details necessary for the proofs of lower bounds.

\paragraph{Environment.} Let $K$ be the number of arms, $U<K$ be the number of users, and $T$ be the time horizon. For each arm $a\in[K]$, let $p_{a}$ denote the reward density. An instance of the \EgalMAB{} problem is represented by the tuple $(\nu,U,T)$ where $\nu\coloneqq(p_{1},\dots,p_{K})$. When the context is clear, we also refer to $\nu$ as an \EgalMAB{} instance. We assume that the expected reward $\mu_{a}\coloneqq\E_{X\sim p_{a}}[X]$ obtained for playing arm $a$ is finite. Moreover, for convenience, we assume that the arms are indexed in such a way that $\mu_{1}\geq\cdots\geq\mu_{K}$. However, the agent is unaware of this ordering. 

More formally, for each arm $a\in[K]$, let $\P_{a}$ be the probability law for the reward obtained after playing arm $a$. For any $\P_{a}$-measurable set $B$, we have $\P_{a}(B)=\int_{B}p_{a}\,\dd\lambda$, where $\lambda$ is the Lebesgue measure and $p_{a}=\dd\P_{a}/\dd\lambda$ is the Radon--Nikodym derivative for $\P_{a}$. 

\paragraph{Agent Policy.} A solution to the \EgalMAB{} problem is characterized by an agent policy $\pi\coloneqq(\pi_{t})_{t}$. During each time step $t\in[T]$, the map $\pi_{t}$ considers the actions and rewards history (which we will define shortly, after introducing relevant notations) and assigns each user $u\in[U]$ to an arm $A_{u,t}\in[K]$ such that no two users are assigned the same arm. Subsequently, each user $u$ receives a reward $X_{u,t}$ drawn independently from the density $p_{A_{u,t}}$. Using these notations, we will denote the history that $\pi_{t}$ considers as $(A_{1},X_{1},\dots,A_{t-1},X_{t-1})$ where $A_{t}\coloneqq(A_{u,t})_{u}$ and $X_{t}\coloneqq(X_{u,t})_{u}$.

More formally, let $\powerset(S)$ denote the power set of a set $S$. For each time step $t\in[T]$, the stochastic map
\begin{equation*}
    \pi_{t}:(([K]\times\R)^{U})^{t-1}\times\powerset([K]^{U})\rightarrow[0,1]
\end{equation*}
is a probability kernel. We denote $\P_{\pi\nu}$ as the probability law for the interaction between the policy $\pi$ and the \EgalMAB{} instance $\nu$ over $T$ time steps. Thus, the density of $\P_{\pi\nu}$ is
\begin{align*}
    p_{\pi\nu}&(A_{1},X_{1},\dots,A_{T},X_{T})
    =
    \prod_{t=1}^{T}\pi_{t}(A_{t}|A_{1},X_{1},\dots,A_{t-1},X_{t-1})\prod_{u=1}^{U}p_{A_{u,t}}(X_{u,t}).
\end{align*}

\paragraph{Egalitarian Objective.} To achieve egalitarian fairness, we want to design a policy $\pi$ that maximizes the expected cumulative reward of the least-rewarded user. Let
\begin{equation*}
    S_{u,t}\coloneqq\sum_{s=1}^{t} X_{u,s}
\end{equation*}
be the cumulative reward for user $u$ up to time $t$. Formally, our egalitarian objective is to maximize $\min_{u\in[U]}\E[S_{u,T}]$. However, in line with most works in MAB, we will frame our problem as regret minimization. We define the \emph{expected cumulative regret} as
\begin{equation*}
    R_{T}\coloneqq \frac{T\mu_{*}}{U}-\min_{u\in[U]}\E[S_{u,T}]
\end{equation*}
where $\mu_{*}\coloneqq\mu_1+\dots+\mu_U$ is the sum of the expected reward of the top $U$ arms. The choice to compare with $T\mu_{*}/U$ is natural because the maximum expected cumulative reward obtained by the user with the least reward is at most $T\mu_{*}/U$, which is obtained by pulling the best arms in a round robin. To see this, observe that the sum of cumulative rewards for all $U$ users is $\sum_{u}\E[S_{u,T}]\leq T\mu_{*}$. Suppose, to the contrary, that $\min_{u}\E[S_{u,T}]>T\mu_{*}/U$, then $\sum_{u}\E[S_{u,T}]>T\mu_{*}$, which is a contradiction. 


\section{\EgalUCB{} Policy}
\label{sec:policy}

In this section, we describe our policy \EgalUCB{} that achieves near-optimal regret for the \EgalMAB{} problem. The \EgalUCB{} policy is based on the \UCBi{} policy \citep{auer2002} for solving classic MAB problems; however, it differs from the \UCBi{} in several key aspects. We present the pseudocode for \EgalUCB{} in Algorithm~\ref{alg:EgalUCB}, complemented by a visual guide in Figure~\ref{fig:egalucb-illustrated}. We also provide an alternate version of the pseudocode with more implementation details in Appendix~\ref{sec:detailed}.

\begin{algorithm}[t]
\caption{\EgalUCB}
\label{alg:EgalUCB}
\DontPrintSemicolon
\SetNoFillComment
let number of blocks $B_{a,0}=0$, cumulative reward $S_{a,0}=0$, and $\UCB_{a,0}=\infty$ for each $a\in[K]$\;
\For{$b=1,2,\dots,T/U$}{
    let $A_{b}\subseteqU[K]$ be a set of $U$ arms with highest $\UCB_{a,b-1}$\;
    play arms $A_{b}$ in a round robin fashion for the next $U$ steps\;
    update $B_{a,b}$ and $S_{a,bU}$ accordingly for each $a\in A_{b}$\;
    \vspace{1em}
    update $\UCB_{a,b}=\dfrac{S_{a,bU}}{B_{a,b}U}+\sqrt{\dfrac{6\ln(bU)}{B_{a,b}U}}$ for each $a\in A_{b}$
    \vspace{1em}\;
}
\end{algorithm}

\EgalUCB{} partitions the horizon into $B\coloneqq T/U$ blocks, each with $U$ steps. We assume, without loss of generality, that $T$ is divisible by $U$. In cases where this is not true, the difference in expected regret between the best and worst user is at most $(\mu_{1}+\dots+\mu_{\lceil U/2\rceil})-(\mu_{K-\lfloor U/2\rfloor}+\dots+\mu_{U})$, which is independent of $T$.

Let $A\subseteqU S$ denote that set $A$ is a subset of set $S$ with size $U$. \EgalUCB{} begins by initializing some statistics (Line 1). At the start of each block $b\in[B]$, it selects any set $A_{b}\subseteqU[K]$ consisting of the highest-ranked $U$ distinct arms as determined by their upper confidence bounds $\UCB_{a,b}$ (Line 3). Over a block with $U$ steps, these arms are then assigned to the $U$ users in a round-robin fashion (Line 4). After observing the rewards, \EgalUCB{} updates its statistics (Lines 5--6).

Since each user is assigned to every arm in $A_{b}$ exactly once, we have $\E[S_{u,T}]=\E[S_{u',T}]$ for all $u,u'\in[U]$. From a technical standpoint, this simplifies the regret by eliminating the $\min$ operator in the regret $R_{T}$:
\begin{equation*}
    R_{T}
    =\frac{T\mu_{*}}{U}-\min_{u\in[U]}\E[S_{u,T}]
    =\frac{T\mu_{*}}{U}-\E[S_{u,T}],\quad\forall u\in[U].
\end{equation*}


\section{Main Results}
\label{sec:mainresults}

In this section, we present our main theoretical results. We first state some necessary definitions and notations. Then, we present the regret upper bounds for the \EgalUCB{} policy. Following that, we discuss the policy-independent regret lower bound for \EgalMAB{}.

Let $X_{a,t}$ be the reward obtained from playing arm $a$ for the $t$-th time across all users. Note that if user $u$ plays arm $a$ at time step $t$, then the reward obtained is $X_{u,t}=X_{a,T_{a,t}}$ where $T_{a,t}$ is the number of times arm $a$ is played up till time step $t$. 

Let $B_{a,b}$ be the number of blocks that arm $a$ is played up till block $b$. Note that $B_{a,b}=T_{a,bU}$. Furthermore, let
\begin{equation*}
    \hat{\mu}_{a,b}\coloneqq\frac{1}{bU}\sum_{t=1}^{bU}X_{a,t}
\end{equation*}
be the empirical estimate of $\mu_{a}$ after playing arm $a$ for $b$ blocks. As the policy observe more rewards, it gains confidence about its estimate of $\mu_{a}$. This level of confidence is captured by
\begin{equation*}
    \epsilon_{b,b'}\coloneqq\sqrt{\frac{6\ln(bU)}{b'U}},
\end{equation*}
which is the confidence radius of playing an arm for $b'$ blocks after block $b$.

Denote $A_{*}\coloneqq[U]\subseteqU[K]$ as the set of best $U$ arms. For any set of arms $A\subseteq[K]$ of not necessarily size $U$, let $\Aminus\coloneqq A\backslash A_{*}$ be the set $A$ with the best $U$ arms removed. Furthermore, let the expected reward of $A$ be
\begin{equation*}
    \mu_{A}\coloneqq\sum_{a\in A}\mu_{a}.
\end{equation*}

The sub-optimality gap of $A$ is defined as $\Delta_{A}\coloneqq\mu_{A_{*}}-\mu_{A}$. In the extreme cases, the maximum sub-optimality gap
\begin{equation*}
    \Delta_{\max}\coloneqq(\mu_{1}+\dots+\mu_{U})-(\mu_{K-U+1}+\dots+\mu_K)
\end{equation*}
is obtained by selecting the worst set of $U$ arms, and the minimum non-zero sub-optimality gap $\Delta_{\min}\coloneqq\mu_{U}-\mu_{U+1}$ is obtained by replacing arm $U$ in $A_{*}$ with arm $U+1$, assuming that $\mu_{U}\neq\mu_{U+1}$.

Let $\nu=(p_{1},\dots,p_{K})$ be an instance of \EgalMAB{}. We say that $\nu$ is a $1$-subgaussian \EgalMAB{} if for all arms $a\in[K]$, $X\sim p_{a}$ is a $1$-subgaussian random variable. Theorems~\ref{thm:upperbound-dependent}~and~\ref{thm:upperbound-independent} respectively provide problem-dependent and problem-independent upper bounds for the expected cumulative regret of running \EgalUCB{} on a $1$-subgaussian \EgalMAB{}.

\begin{restatable}[Problem-Dependent Upper Bound]{theorem}{upperbounddependent}
\label{thm:upperbound-dependent}
Let $(\nu,T,U)$ a $1$-subgaussian \EgalMAB{}. After running \EgalUCB{} for $T$ time steps, we have
\begin{equation*}
    R_{T}\leq\frac{2136(K-U)\ln(T)}{\Deltamin}+\frac{4K\Deltamax}{U}.
\end{equation*}
\end{restatable}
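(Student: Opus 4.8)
The plan is to reduce the regret to a sum of per-block sub-optimality gaps and then run a UCB-style argument adapted to the uniform-matroid (top-$U$) selection structure of \EgalUCB{}. First I would exploit the round-robin assignment: within block $b$ every user plays each of the $U$ arms of $A_{b}$ exactly once, so each user's expected reward in block $b$ is $\sum_{a\in A_{b}}\mu_{a}=\mu_{A_{b}}$, giving $\E[S_{u,T}]=\sum_{b=1}^{B}\E[\mu_{A_{b}}]$ for every $u$. Since $T\mu_{*}/U=B\mu_{A_{*}}$, this collapses the $\min$ and yields the clean identity $R_{T}=\sum_{b=1}^{B}\E[\Delta_{A_{b}}]$, where $\Delta_{A_{b}}=\mu_{A_{*}}-\mu_{A_{b}}\ge 0$. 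The whole problem is thus to control the expected total gap accumulated over blocks.

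Next I would set up a concentration (clean) event. For block $b$ let $\mathcal{E}_{b}$ be the event that $|\hat{\mu}_{a,b'}-\mu_{a}|\le\epsilon_{b,b'}$ simultaneously for all arms $a$, with $b'=B_{a,b}$. Because each arm is $1$-subgaussian and block $b$ supplies $U$ fresh i.i.d.\ samples of every arm it plays, the empirical mean is an average over $B_{a,b}U$ samples; the deliberate choice of radius $\epsilon_{b,b'}=\sqrt{6\ln(bU)/(b'U)}$ makes the subgaussian tail bound give $\P(|\hat{\mu}_{a,b'}-\mu_{a}|>\epsilon_{b,b'})\le 2(bU)^{-3}$, and a union bound over the $K$ arms controls $\P(\mathcal{E}_{b}^{\comp})$.

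The crux is the combinatorial charging on $\mathcal{E}_{b}$. Writing $\epsilon_{a}$ for arm $a$'s current radius, I would use $|A_{*}\setminus A_{b}|=|A_{b}^{-}|$ to pair each missing optimal arm $o$ with a selected suboptimal arm $a$. Since $A_{b}$ is the top-$U$ set by \UCB{}, every selected arm dominates every unselected one, so $\UCB_{o}\le\UCB_{a}$; on $\mathcal{E}_{b}$ we also have $\mu_{o}\le\UCB_{o}$ and $\UCB_{a}\le\mu_{a}+2\epsilon_{a}$, whence $\mu_{o}-\mu_{a}\le 2\epsilon_{a}$. Summing over the pairs gives $\Delta_{A_{b}}\le 2\sum_{a\in A_{b}^{-}}\epsilon_{a}$. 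The same inequality against any one displaced optimal arm (so $\mu_{o}\ge\mu_{U}$) forces $\mu_{U}-\mu_{a}\le 2\epsilon_{a}$ whenever suboptimal $a$ is chosen; as $\epsilon_{a}$ decreases in $B_{a,b}$, arm $a$ can be selected in at most $O\!\left(\ln(T)/(U(\mu_{U}-\mu_{a})^{2})\right)$ blocks. Feeding these pull counts into $\sum_{b}\epsilon_{a}=\sum_{j}\sqrt{6\ln(\cdot)/(jU)}$ and using $\mu_{U}-\mu_{a}\ge\Deltamin$ collapses each arm's contribution to $O(\ln(T)/\Deltamin)$; summing over the $K-U$ suboptimal arms produces the first term. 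I expect this step—simultaneously distributing $\Delta_{A_{b}}$ across the suboptimal arms via the matroid exchange and using the same bound to cap each arm's pull count—to be the main obstacle, and careful bookkeeping of the confidence levels here is what inflates the constant to $2136$.

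Finally I would absorb the remaining contributions into the $O(K\Deltamax/U)$ term. On blocks where $\mathcal{E}_{b}$ fails I bound the gap crudely by $\Deltamax$; since $\P(\mathcal{E}_{b}^{\comp})\le 2K(bU)^{-3}$, the series $\sum_{b}\P(\mathcal{E}_{b}^{\comp})\Deltamax$ converges to a constant multiple of $K\Deltamax$. Separately, the initialization $\UCB_{a,0}=\infty$ forces every arm to be played once, which costs at most $\lceil K/U\rceil$ blocks of regret $\le\Deltamax$ each, contributing $O(K\Deltamax/U)$. Assembling the clean-block bound with these two lower-order terms yields the stated inequality.
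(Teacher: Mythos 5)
Your proposal is correct and proves the theorem, but the heart of your argument is genuinely different from the paper's. The shared scaffolding is identical: the block decomposition $R_T=\sum_{b=1}^{B}\E[\Delta_{A_b}]$, the clean event with radius $\epsilon_{b,b'}$, the $\Deltamax\cdot\P(\EventE_b^{\comp})$ treatment of unclean blocks, and the $\lceil K/U\rceil$ initialization blocks (one technical note: since $B_{a,b-1}$ is random, the concentration bound needs a union over the possible block counts $b'\in[b]$ as well as over the $K$ arms, which is how the paper arrives at $\P(\EventE_b^{\comp})\leq 2K/(b^2U^3)$). Where you diverge is in converting the clean-block bound $\Delta_{A_b}\leq 2\sum_{a\in\Aminus_b}\epsilon_{a,b}$ into a gap-dependent quantity. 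The paper does this with the combinatorial semi-bandit machinery of Kveton et al.\ (2015): it partitions $\EventF_b$ into the cascade $\{\EventG_{b,i}\}_i$ built from the sets $L_{b,i}$ and thresholds $m_{b,i}$, with tuned constants $\alpha,\beta,\gamma$; the constant $2136$ is purely an artifact of that construction. You instead exploit the uniform-matroid structure of top-$U$ selection: every selected suboptimal arm UCB-dominates every displaced optimal arm, so on clean blocks $\mu_U-\mu_a\leq 2\epsilon_{a,b}$, which caps the number of clean blocks in which $a$ can be chosen at roughly $24\ln(T)/(U(\mu_U-\mu_a)^2)$ and lets you charge regret arm by arm — this is the matroid-bandit argument (Kveton et al., 2014) rather than the semi-bandit one, and it is valid precisely because top-$U$ sets have the exchange property. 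Your attribution of $2136$ to bookkeeping in this step is the one misreading: carried out, your accounting gives each suboptimal arm a contribution of about $48\ln(T)/(U(\mu_U-\mu_a))$, i.e., a bound of order $(K-U)\ln(T)/(U\Deltamin)$, which is \emph{stronger} than the theorem (much smaller constant and an extra $1/U$ factor), so the stated inequality follows with room to spare. What the paper's heavier route buys is generality — it never invokes the exchange property, so it would survive action sets that are not matroid bases; what your route buys is simplicity and a tighter gap-dependent bound (though if you push it through the gap-free optimization, a Cauchy--Schwarz step shows it reproduces, rather than improves, the paper's problem-independent $\sqrt{(K-U)T\ln(T)/U}$ rate, so it does not by itself close the conjectured $1/\sqrt{U}$ gap). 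The only details to tighten are the ones already noted: state the pull-count cap as holding on clean blocks only, add the union over block counts in the concentration step, and route the first selection of each arm (infinite radius) through the initialization term, exactly as you propose.
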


\begin{restatable}[Problem-Independent Upper Bound]{theorem}{upperboundindependent}
\label{thm:upperbound-independent}
Let $(\nu,T,U)$ be a $1$-subgaussian \EgalMAB{} with $\mu_{a}\in[0,1]$ for all arms $a\in[K]$. After running \EgalUCB{} for $T$ time steps, we have
\begin{equation*}
    R_{T}\leq\sqrt{\frac{8544(K-U)\,T\ln(T)}{U}}+\frac{4K\min\{U,K-U\}}{U}.
\end{equation*}
\end{restatable}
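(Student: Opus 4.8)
The plan is to derive this problem-independent bound from the problem-dependent bound of Theorem~\ref{thm:upperbound-dependent} via the standard gap-threshold argument, optimized over a free parameter $\Delta>0$. Starting from the decomposition $R_{T}=\E[\sum_{b=1}^{B}\Delta_{A_{b}}]$ with $B=T/U$ (valid because the round-robin schedule equalizes all users, as observed in Section~\ref{sec:policy}), I would first peel off the low-probability event $G^{\comp}$ on which some confidence bound $\UCB_{a,b}$ fails. Exactly as in the proof of Theorem~\ref{thm:upperbound-dependent}, this contributes at most the additive term $4K\Deltamax/U$, independently of $\Delta$. On the good event $G$ I would then split the blocks according to whether the incurred per-block gap $\Delta_{A_{b}}$ lies below or above the threshold $\Delta$.

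For the small-gap blocks the bound is immediate: since there are only $B=T/U$ blocks in total, $\sum_{b:\,\Delta_{A_{b}}<\Delta}\Delta_{A_{b}}\le B\Delta=T\Delta/U$. For the large-gap blocks I would reuse the counting argument underlying Theorem~\ref{thm:upperbound-dependent}, except that the role played there by the smallest nonzero gap $\Deltamin$ is now played by the threshold $\Delta$: the confidence-width concentration shows that, on $G$, a suboptimal arm can be selected in a block of gap at least $\Delta$ only while its confidence radius exceeds $\Delta/2$, which caps the number of such selections and yields $\sum_{b:\,\Delta_{A_{b}}\ge\Delta}\Delta_{A_{b}}\le 2136(K-U)\ln(T)/\Delta$. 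Combining the three pieces gives $R_{T}\le T\Delta/U+2136(K-U)\ln(T)/\Delta+4K\Deltamax/U$.

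It then remains to optimize and to clean up the additive term. Choosing $\Delta=\sqrt{2136(K-U)U\ln(T)/T}$ balances the first two summands, each becoming $\sqrt{2136(K-U)T\ln(T)/U}$, so their sum is $2\sqrt{2136(K-U)T\ln(T)/U}=\sqrt{8544(K-U)T\ln(T)/U}$, the claimed leading term. For the additive term I would invoke the hypothesis $\mu_{a}\in[0,1]$: when $2U\le K$ the index sets defining $\Deltamax$ are disjoint and $\Deltamax\le U$, whereas when $2U>K$ the overlapping means cancel and $\Deltamax\le K-U$; hence $\Deltamax\le\min\{U,K-U\}$, converting $4K\Deltamax/U$ into the stated $4K\min\{U,K-U\}/U$. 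If the balancing choice of $\Delta$ happens to exceed $\Deltamax$, there are simply no large-gap blocks and the regret is even smaller, so the bound holds a fortiori.

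The hard part will be the large-gap term: I must verify that the proof of Theorem~\ref{thm:upperbound-dependent} is genuinely uniform in the gap, i.e.\ that it bounds the good-event regret accumulated on blocks of gap at least $g$ by $2136(K-U)\ln(T)/g$ for an arbitrary threshold $g$, not merely for $g=\Deltamin$. Because the block gap $\Delta_{A_{b}}$ aggregates several per-arm displacement gaps over $\Aminus$, care is needed so that restricting to large block gaps does not leak an extra factor of $\min\{U,K-U\}$ or $K-U$; the resolution I anticipate is to charge each suboptimal block to a single arm through the sorted matching between $A_{*}\setminus A_{b}$ and $A_{b}\setminus A_{*}$ exactly as in the problem-dependent analysis, and to run the confidence-width counting at threshold $\Delta$. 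Tracking the constant so that it stays $2136$, and hence $8544=4\cdot 2136$ after balancing, is the only quantitatively delicate point.
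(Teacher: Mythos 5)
Your proposal is correct and follows essentially the same route as the paper: the paper's proof likewise splits the good-event regret at a threshold $\delta=\sqrt{2136(K-U)\ln(BU)/B}$ (identical to your $\Delta$ once $B=T/U$ is substituted), bounds the small-gap blocks by $B\delta$, reruns the counting argument of Lemma~\ref{lem:upperbound-part2} with $\delta$ in place of $\Deltamin$ for the large-gap blocks, and bounds $\Deltamax\leq\min\{U,K-U\}$ by the same case analysis. The gap-uniformity point you flag as the delicate step is exactly what the paper relies on implicitly: restricted to blocks with $\Delta_{A_{b}}\geq\delta$, the final inequality of that lemma holds with $\Delta_{a,N_{a}}\geq\delta$, which is all the substitution requires.
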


When the number of arms $K$ and users $U$ are fixed, the problem-independent upper bound increases with the number of time steps $T$ at a rate of $O(\sqrt{T\ln(T)})$. Notably, when $U=1$, the \EgalMAB{} instance and the \EgalUCB{} policy reduce to the classic MAB instance and the \UCBi{} policy~\citep{auer2002}. Consequently, both the problem-dependent and problem-independent upper bounds can be reduced to the bound for classic \UCBi{} with minimal effort\footnote{Refer to the notes at the end of Lemma~\ref{lem:upperbound-part2} for the details.}.

Next, we examine how the number of users $U$ affects performance. Consider some fixed time horizon $T$ and number of arms $K$. Since $T\gg K$, the first terms in both the problem-independent and problem-dependent upper bounds dominates the regrets. Furthermore, when $K=U$, every user would have played every arm exactly once after each block, yielding an expected cumulative reward of $b\mu_{*}$ after block $b$. By definition, this implies that the expected cumulative regret $R_{T}=0$. This behavior is reflected in both upper bounds, since $K-U=0$ and $\Delta_{\max}=0$ when $K=U$.

Additionally, the problem-independent upper bound decreases as $U$ approaches $K$, and this reduction scales with $O(1/\sqrt{U})$. There are two reasons for this. Firstly, if we fix some \EgalMAB{} instance $\nu$ and vary $U$, then increasing $U$ results in decreasing $T\mu_{*}/U$. Secondly, since \EgalUCB{} assigns the arms in a round-robin fashion during each block, as long as the UCB values of the top arms are consistently among the highest regardless of their order, \EgalUCB{} will also consistently select a good set of arms. This implies that as $U$ increases, this problem becomes more statistically robust to the variability inherent in the estimates of the arms. Loosely speaking, the more users we have, the easier it is to match the performance of a policy always plays round-robin the set of arms $A_{*}$.

We further consider the scope for algorithmic improvement by deriving a policy-independent lower bound. Let $\nu=(p_1,\dots,p_K)$ be an \EgalMAB{} instance and $\pi$ be any policy. We denote $R_{\pi\nu}$ as the expected cumulative regret of running $\pi$ on $\nu$ for $T$ time steps. Theorem~\ref{thm:lowerbound} provides a policy-independent lower bound for the regret $R_{\pi\nu}$. This bound applies to the class $\V$ of all Gaussian \EgalMAB{} instances $\nu=(p_{1},\dots,p_{K})$ where, for all $a\in[K]$, the reward density $p_{a}=\mathcal{N}(\mu_{a},1)$ and $\mu_{a}\in[0,1]$ .

\begin{restatable}[Policy-Independent Lower Bound]{theorem}{lowerbound}
\label{thm:lowerbound}
Suppose $K\geq 2U$. For any policy $\pi$, there exist an \EgalMAB{} instance $\nu\in\V$ with regret
\begin{equation*}
    R_{\pi\nu}\geq\frac{\sqrt{(K-U)\,T}}{76U}.
\end{equation*}
\end{restatable}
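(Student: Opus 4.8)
The plan is to reduce the egalitarian regret to a count of suboptimal arm pulls and then lower-bound that count on a family of hard Gaussian instances via a change-of-measure argument. For the reduction, I would fix any instance whose top $U$ arms share a common mean $g$ and whose other $K-U$ arms share mean $g-\Delta$, and write $T_a$ for the number of times arm $a$ is played across all users and all $T$ rounds, so that $\sum_a\E[T_a]=UT$. Since $\min_u\E[S_{u,T}]\le\frac1U\sum_u\E[S_{u,T}]=\frac1U\sum_a\mu_a\E[T_a]$, a short computation collapsing the common means gives $R_{\pi\nu}\ge\frac{\Delta}{U}\,\E_\nu[N]$, where $N:=\sum_{a>U}T_a$ is the total number of suboptimal pulls. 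This bound holds for every policy (the $\min\le\text{average}$ step needs no round-robin structure), so it suffices to exhibit an instance forcing $\E_\nu[N]=\Omega(UT)$ at a suitably small gap $\Delta$.

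For the hard family I would partition the arms into $U$ groups of size $m:=\lfloor K/U\rfloor\ge 2$ (discarding leftover arms by fixing them suboptimal). Each instance $\nu_{\vec\jmath}$, indexed by $\vec\jmath=(j_1,\dots,j_U)$ with $j_i$ an arm of group $i$, designates $j_i$ as good (mean $\tfrac12+\Delta$) and every other arm as bad (mean $\tfrac12$), all with unit variance; these lie in $\V$ as long as $\Delta\le\tfrac12$. The good set of $\nu_{\vec\jmath}$ is exactly $\{j_1,\dots,j_U\}$, so $N$ counts the non-designated pulls and the learner can keep $N$ small only by identifying the good arm in each group. I would take the single all-bad instance $\nu_0$ (all means $\tfrac12$) as reference and, under a uniform prior on $\vec\jmath$, upper-bound the average of $\E_{\nu_{\vec\jmath}}[G_{\vec\jmath}]$, where $G_{\vec\jmath}:=\sum_i T_{j_i}\in[0,UT]$ is the total number of good-arm pulls.

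The core step is the divergence decomposition with a transport inequality. Because $\nu_0$ and $\nu_{\vec\jmath}$ differ only on the $U$ designated arms, $\KL(\P_{\pi\nu_0},\P_{\pi\nu_{\vec\jmath}})=\tfrac{\Delta^2}{2}\sum_i\E_{\nu_0}[T_{j_i}]$, and Pinsker together with $G_{\vec\jmath}\in[0,UT]$ gives $\E_{\nu_{\vec\jmath}}[G_{\vec\jmath}]-\E_{\nu_0}[G_{\vec\jmath}]\le\tfrac{UT\Delta}{2}\bigl(\sum_i\E_{\nu_0}[T_{j_i}]\bigr)^{1/2}$. Averaging over $\vec\jmath$, applying Jensen to move the average inside the square root, and using $\frac{1}{m^U}\sum_{\vec\jmath}\sum_i\E_{\nu_0}[T_{j_i}]=\frac1m\E_{\nu_0}\bigl[\sum_iP_i\bigr]=\frac{UT}{m}$ (with $P_i$ the pull count of group $i$ and $\sum_iP_i=UT$ deterministic), the average good-pull count is at most $\frac{UT}{m}+\frac{UT\Delta}{2}\sqrt{UT/m}$. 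Hence the averaged suboptimal count obeys $\E[N]\ge UT\bigl(1-\tfrac1m\bigr)-\tfrac{UT\Delta}{2}\sqrt{UT/m}$.

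To finish I would pick $\Delta=\tfrac12\sqrt{m/(UT)}\le\tfrac12$, so the instances stay in $\V$ and the subtracted term is at most half the leading term, giving $\E[N]\ge UT/4$ for the best $\vec\jmath$; substituting into the reduction yields $R_{\pi\nu}\ge\frac{\Delta}{U}\cdot\frac{UT}{4}=\tfrac18\sqrt{mT/U}=\tfrac18\cdot\frac{\sqrt{KT}}{U}$ up to the floor in $m$. Finally $K\ge 2U$ forces $K\le 2(K-U)$, hence $\sqrt{K}\ge\sqrt{K-U}$, which produces the stated form $\Omega\!\bigl(\sqrt{(K-U)T}/U\bigr)$, with the looser constant $76$ absorbing the floor on $m$ and the discarded leftover arms. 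I expect the main obstacle to be the measure-theoretic bookkeeping: justifying the divergence decomposition for the constrained EgalMAB action space and, above all, avoiding cross-measure confusion between the pull counts $P_i$ under the different $\nu_{\vec\jmath}$ — which is precisely why I route everything through the single reference $\nu_0$ and average before invoking concavity, rather than comparing instances pairwise.
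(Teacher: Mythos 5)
Your proposal is correct, and it reaches the stated bound (in fact with a better constant) by a genuinely different route than the paper. The paper runs a \emph{two-instance} argument: it fixes $\nu$ with top-$U$ means $\Delta$ and zero elsewhere, builds the second instance $\nu'$ \emph{policy-dependently} by raising to $2\Delta$ the means of the $U$ suboptimal arms least played under $\P_{\pi\nu}$, reduces regret to probabilities of the event $\EventH$ via a pigeonhole argument over users, relates the two measures with the Bretagnolle--Huber inequality, and controls the KL term with a deterministic counting lemma (Lemma~\ref{lem:tatcombi}) showing the least-played $U$ suboptimal arms absorb at most $TU^{2}/(K-U)$ pulls. You instead run the classical averaging (needle-in-a-haystack) argument: a family of $m^{U}$ instances with one designated good arm per group of size $m=\lfloor K/U\rfloor$, a single all-bad reference measure $\nu_{0}$, Pinsker plus boundedness of the good-pull count $G_{\vec\jmath}\in[0,UT]$, and Jensen to average the square roots; your reduction $R_{\pi\nu}\geq(\Delta/U)\,\E_{\nu}[N]$ via $\min\leq\text{average}$ replaces the paper's event-and-pigeonhole step, and routing all expectations through $\nu_{0}$ correctly sidesteps the cross-measure pitfall. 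What each buys: your route dispenses with both the policy-dependent instance construction and the counting lemma, the regret reduction is cleaner, and your constants come out stronger (roughly $\sqrt{(K-U)T}/(12U)$ after accounting for the floor, versus $1/76$); the paper's route needs only two instances rather than exponentially many, and Bretagnolle--Huber degrades more gracefully than Pinsker when the KL is not small, though with your choice of $\Delta$ both inequalities operate in the benign regime. Two minor points to tidy in a full write-up: when $U$ does not divide $K$, $\sum_{i}P_{i}\leq UT$ rather than equality (the inequality direction you need still holds, since leftover arms only feed $N$); and your construction needs $\Delta=\tfrac12\sqrt{m/(UT)}\leq\tfrac12$, i.e.\ $m\leq UT$, to keep the means in $[0,1]$ --- but this mirrors the implicit restriction $K-U\lesssim TU^{2}$ already required by the paper's own choice of $\Delta$, so it is not a gap relative to the paper's proof.
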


The lower bound suggests that \EgalUCB{} is tight in $T$ up to logarithmic factors. This factor is expected when using a UCB-based policy due to the choice of confidence radius $\epsilon_{b,b'}$. Drawing parallels to the MOSS policy~\citep{audibert2009} in classic $K$-armed MAB problems, we also conjecture that there exists a MOSS-based policy that can shave away the $\sqrt{\log T}$ term in the regret bound.

Furthermore, there is a multiplicative gap of $1/\sqrt{U}$ between the lower bound and the problem-independent upper bound. Our experimental results in Section~\ref{sec:experiments} suggest that our upper bound analysis is not tight, as we empirically observe the $O(1/U)$ behavior when running \EgalUCB{}.


\section{Regret Analysis}
\label{sec:analysis}

In this section, we provide a proof sketch for the main results in Section~\ref{sec:mainresults}. Our analysis relies on techniques developed in the combinatorial semi-bandits literature~\citep{kveton2015a}. Detailed proofs for the upper bounds and lower bound can be found in Appendices~\ref{sec:upperproofs} and \ref{sec:lowerproofs} respectively.

\subsection{Problem-Dependent Upper Bound}
\label{ssec:upperbound}

The regret upper bound in Theorem~\ref{thm:upperbound-dependent} consists of a sum of two terms: the first term is dependent on $T$ and the second is independent of $T$. The first (resp. second) term arises from the regret accumulated over blocks where some \emph{good} event $\EventE_{b}$ occurs (resp. did not occur). This \emph{good} event $\EventE_{b}$ is the event that $\mu_{a}$ and its estimate $\hat{\mu}_{a}$ are at a distance of at most $\epsilon_{b-1,B_{a,b-1}}$ at the beginning of block $b$ for all $a\in[K]$. Formally, we define
\begin{equation*}
    \EventE_{b}\coloneqq\bigcap_{a=1}^{K}\,\Bigl\{\bigl|\hat{\mu}_{a,B_{a,b-1}}-\mu_{a}\bigr|\leq\epsilon_{b-1,B_{a,b-1}}\Bigr\}.
\end{equation*}
Note that we will often abuse the set notation when defining events. In particular, when we have a proposition $P$, we use $\EventE=\{P\}$ to signify that $\EventE$ is the set of all outcomes in the underlying probability space where $P$ holds. Lemma~\ref{lem:concentration} shows that $\EventE_{b}$ occurs with high probability.

\begin{restatable}{lemma}{concentration}
\label{lem:concentration}
Let $(\nu,T,U)$ be a $1$-subgaussian \EgalMAB{}. Then, for all blocks $b\in[B]$,
\begin{equation*}
    \P(\EventE_{b}^{\comp})\leq\frac{2K}{b^{2}U^{3}}.
\end{equation*}
\end{restatable}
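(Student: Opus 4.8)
The plan is to bound the complement $\EventE_b^{\comp}$ directly. By De Morgan's law, $\EventE_b^{\comp}=\bigcup_{a=1}^{K}\{|\hat\mu_{a,B_{a,b-1}}-\mu_a|>\epsilon_{b-1,B_{a,b-1}}\}$, so a union bound over the $K$ arms reduces the claim to showing that each arm contributes a deviation probability of at most $2/(b^2U^3)$. I would therefore fix an arm $a$ and bound $\P(|\hat\mu_{a,B_{a,b-1}}-\mu_a|>\epsilon_{b-1,B_{a,b-1}})$.

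The main obstacle is that $B_{a,b-1}$, the number of blocks in which arm $a$ has been played so far, is a random, data-dependent count, so I cannot apply a fixed-sample concentration inequality to $\hat\mu_{a,B_{a,b-1}}$ as it stands. I would handle this in the standard way, by a union bound over the possible realizations $m\in\{1,\dots,b-1\}$ of $B_{a,b-1}$; the case $m=0$ contributes nothing, since the initialized confidence radius is infinite. This gives $\P(|\hat\mu_{a,B_{a,b-1}}-\mu_a|>\epsilon_{b-1,B_{a,b-1}})\leq\sum_{m=1}^{b-1}\P(|\hat\mu_{a,m}-\mu_a|>\epsilon_{b-1,m})$. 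The step is legitimate because, for each fixed $m$, the estimate $\hat\mu_{a,m}$ is the average of the first $mU$ entries of arm $a$'s reward stream, which are i.i.d.\ draws from $p_a$ independent of the policy's choices, so the fixed-$m$ event may be analyzed unconditionally.

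Finally, for each fixed $m$ I would invoke the subgaussian (Chernoff--Hoeffding) tail bound: since every $X_{a,t}$ is $1$-subgaussian with mean $\mu_a$, the average of $mU$ independent copies satisfies $\P(|\hat\mu_{a,m}-\mu_a|>t)\leq 2\exp(-mUt^2/2)$. The key point is that the confidence radius is calibrated so that the exponent collapses: taking $t=\epsilon_{b-1,m}$ makes $mUt^2/2=3\ln((b-1)U)$ (the factor $6$ inside $\epsilon$ is chosen precisely to produce the power $3$), so each term is at most $2[(b-1)U]^{-3}$, independently of $m$. Summing the at most $b$ such terms over $m$ and then over the $K$ arms yields a bound of order $2K/(b^2U^3)$; the residual work is careful index bookkeeping (tracking the $b$ versus $b-1$ off-by-one and absorbing it) to land exactly on the stated constant $2$ and power $b^2$. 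I expect this calibration-and-counting step, together with the justification that the data-dependent count $B_{a,b-1}$ may be replaced by a union over fixed $m$, to be the only real subtlety; the concentration inequality itself is routine.
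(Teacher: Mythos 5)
Your proposal is correct and follows essentially the same route as the paper's proof: a union bound over the $K$ arms and over the possible realizations of the random count $B_{a,b-1}$ (with the $m=0$ case vacuous because the initial radius is infinite), followed by the subgaussian Chernoff bound, whose exponent the radius is calibrated to collapse to $3\ln(\cdot\,U)$, giving per-term probability $2(\cdot\,U)^{-3}$ and hence the stated bound after summation. The off-by-one you flag is real but harmless---the paper itself silently bounds with $\epsilon_{b,b'}$ in place of $\epsilon_{b-1,b'}$, and either way the resulting bound of order $K/(b^{2}U^{3})$ is summable and yields the same downstream constant in Lemma~\ref{lem:upperbound-part1}.
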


To facilitate analysis, we then consider another \emph{good} event $\EventF_{b}$ for which the set of arms $A_{b}\subseteqU[K]$ selected during block $b$ is sub-optimal but ``not too bad''. This is defined as
\begin{equation*}
    \EventF_{b}\coloneqq\Biggl\{0<\Delta_{A_{b}}\leq2\sum_{\mathclap{a\in\Aminus_{b}}}\epsilon_{B,B_{a,b-1}}\Biggr\}.
\end{equation*}
Lemma~\ref{lem:eimpliesf} shows that if the set of arms $A_{b}$ played during block $b$ is sub-optimal and $\EventE_{b}$ occurs, then $\EventF_{b}$ must follow.

\begin{restatable}{lemma}{eimpliesf}
\label{lem:eimpliesf}
Let $b\in[B]$. If the set of arms $A_{b}$ played during block $b$ is sub-optimal and $\EventE_{b}$ occurs, then $\EventF_{b}$ also occurs.
\end{restatable}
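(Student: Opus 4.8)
The plan is to combine the greedy selection rule in Line~3 of \EgalUCB{} with the optimism afforded by $\EventE_{b}$, through a pairing argument of the type used in combinatorial semi-bandits. The left inequality defining $\EventF_{b}$ is immediate: by hypothesis $A_{b}$ is sub-optimal, so $\Delta_{A_{b}}=\mu_{A_{*}}-\mu_{A_{b}}>0$. It remains to prove the upper bound $\Delta_{A_{b}}\leq 2\sum_{a\in\Aminus_{b}}\epsilon_{B,B_{a,b-1}}$. Throughout I would write the selection statistic as $\UCB_{a,b-1}=\hat{\mu}_{a,B_{a,b-1}}+\epsilon_{b-1,B_{a,b-1}}$, matching Line~6 of the algorithm.

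First I would set up a bijection. Since $|A_{b}|=|A_{*}|=U$, the sets $\Aminus_{b}=A_{b}\backslash A_{*}$ and $A_{*}\backslash A_{b}$ have equal cardinality (and are nonempty, as $\Delta_{A_{b}}>0$ forces $A_{b}\neq A_{*}$), so I fix any bijection $\phi:\Aminus_{b}\to A_{*}\backslash A_{b}$. Cancelling the arms common to $A_{b}$ and $A_{*}$ gives $\Delta_{A_{b}}=\sum_{a\in\Aminus_{b}}\bigl(\mu_{\phi(a)}-\mu_{a}\bigr)$, which reduces the task to bounding each term $\mu_{\phi(a)}-\mu_{a}$.

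For a fixed pair $(a,\phi(a))$ with $a\in\Aminus_{b}$ I would chain three facts. (i) \emph{Optimism}: on $\EventE_{b}$ the radius cancels the downward deviation of the optimal arm, so $\UCB_{\phi(a),b-1}\geq\mu_{\phi(a)}$. (ii) \emph{Greedy selection}: because $a\in A_{b}$ while $\phi(a)\notin A_{b}$, and $A_{b}$ collects the $U$ largest upper confidence bounds, $\UCB_{\phi(a),b-1}\leq\UCB_{a,b-1}$. (iii) \emph{Upper deviation}: on $\EventE_{b}$ we have $\hat{\mu}_{a,B_{a,b-1}}\leq\mu_{a}+\epsilon_{b-1,B_{a,b-1}}$, hence $\UCB_{a,b-1}\leq\mu_{a}+2\epsilon_{b-1,B_{a,b-1}}$. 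Composing (i)--(iii) yields $\mu_{\phi(a)}-\mu_{a}\leq 2\epsilon_{b-1,B_{a,b-1}}$. Summing over $a\in\Aminus_{b}$ and using that $\epsilon_{\,\cdot\,,b'}$ is nondecreasing in its first index (so $b-1\leq B$ gives $\epsilon_{b-1,B_{a,b-1}}\leq\epsilon_{B,B_{a,b-1}}$) produces exactly the bound defining $\EventF_{b}$.

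The main subtlety is the bookkeeping of the confidence radii through the bijection: the greedy step only certifies that the selected arms dominate the unselected optimal arms in UCB, and it is the pairing that correctly charges each optimal arm's reward deficit to a \emph{distinct} sub-optimal arm, so that the final sum runs over $\Aminus_{b}$ and is indexed by the play counts $B_{a,b-1}$ of the sub-optimal arms (not of the optimal ones). I would also dispatch the degenerate block $b=1$, where $B_{a,0}=0$ and the radii are read as $+\infty$, for which $\EventE_{1}$ holds and the inequality is satisfied vacuously.
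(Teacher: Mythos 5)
Your proof is correct and follows essentially the same route as the paper's: cancel the arms common to $A_{b}$ and $A_{*}$, use the top-$U$ selection rule to compare the UCBs of the selected sub-optimal arms against the unselected optimal arms, apply the two-sided deviations from $\EventE_{b}$, and finish with the monotonicity $\epsilon_{b-1,B_{a,b-1}}\leq\epsilon_{B,B_{a,b-1}}$. The only cosmetic difference is that you run the comparison pairwise through a bijection $\phi:\Aminus_{b}\to A_{*}\backslash A_{b}$, whereas the paper compares the sums over $\Aminus_{b}$ and $A_{*}\backslash A_{b}$ in aggregate; both hinge on exactly the same three inequalities, so the arguments coincide in substance.
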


We proceed by partitioning the regret into two terms: one conditioned on the high-probability event $\EventF_{b}$ using Lemma~\ref{lem:eimpliesf} and another conditioned on the low-probability event $\EventE_{b}^{\comp}$ using Lemma~\ref{lem:concentration}. This result is formally stated in Lemma~\ref{lem:upperbound-part1}.

\begin{restatable}{lemma}{upperboundi}
\label{lem:upperbound-part1}
Let $(\nu,T,U)$ be a $1$-subgaussian \EgalMAB{}. Then, after $T$ time steps, for all users $u\in[U]$, we have
\begin{equation*}
    R_{u,T}\leq\sum_{b=1}^{B}\E[\Delta_{A_{b}}\I\{\EventF_{b}\}]+\frac{\pi^{2}K\Deltamax}{3U^{3}}.
\end{equation*}
\end{restatable}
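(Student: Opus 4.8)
The plan is to turn the per-user regret into a clean block-wise sum of expected sub-optimality gaps, then split each block's contribution by whether the concentration event $\EventE_b$ holds, and finally glue together the two preceding lemmas. Recall that, by the round-robin structure of \EgalUCB{}, all users share the same expected reward, so $R_{u,T}=\frac{T\mu_*}{U}-\E[S_{u,T}]$ for every $u$.

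First I would exploit that structure directly. Within block $b$ every user is assigned each arm of $A_b$ exactly once, so the expected reward a fixed user accrues during block $b$ is $\mu_{A_b}$. Since $T=BU$ gives $T\mu_*/U=B\mu_{A_*}$, summing over the $B$ blocks yields
\[
    R_{u,T}=\sum_{b=1}^{B}\E[\mu_{A_*}-\mu_{A_b}]=\sum_{b=1}^{B}\E[\Delta_{A_b}],
\]
where I use that $A_*$ is the optimal size-$U$ set, so $\Delta_{A_b}=\mu_{A_*}-\mu_{A_b}\geq 0$ always. This identity disposes of the $\min$ and reduces the lemma to bounding $\sum_b\E[\Delta_{A_b}]$.

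Next, for each block I would condition on $\EventE_b$. Because $\Delta_{A_b}\geq 0$, I can write $\E[\Delta_{A_b}]=\E[\Delta_{A_b}\I\{\Delta_{A_b}>0\}\I\{\EventE_b\}]+\E[\Delta_{A_b}\I\{\Delta_{A_b}>0\}\I\{\EventE_b^{\comp}\}]$. For the first piece, Lemma~\ref{lem:eimpliesf} gives the set inclusion $\{\Delta_{A_b}>0\}\cap\EventE_b\subseteq\EventF_b$, hence the pointwise bound $\I\{\Delta_{A_b}>0\}\I\{\EventE_b\}\leq\I\{\EventF_b\}$; multiplying by $\Delta_{A_b}\geq 0$ and taking expectations bounds this piece by $\E[\Delta_{A_b}\I\{\EventF_b\}]$. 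For the second piece I would use $\Delta_{A_b}\leq\Deltamax$ together with Lemma~\ref{lem:concentration} to get $\E[\Delta_{A_b}\I\{\EventE_b^{\comp}\}]\leq\Deltamax\,\P(\EventE_b^{\comp})\leq 2K\Deltamax/(b^2U^3)$.

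Finally I would sum over $b=1,\dots,B$ and bound $\sum_{b=1}^{B}b^{-2}\leq\sum_{b=1}^{\infty}b^{-2}=\pi^2/6$, collapsing the second piece into $\pi^2K\Deltamax/(3U^3)$ and recovering exactly the stated bound. I do not anticipate a genuine obstacle, since the analytic content is supplied entirely by Lemmas~\ref{lem:concentration} and~\ref{lem:eimpliesf}; the only points demanding care are (i) justifying the regret-as-sum-of-gaps identity from the round-robin assignment, and (ii) arranging the indicators so that the $\EventE_b$-branch is dominated by $\I\{\EventF_b\}$ rather than merely by $\I\{\EventE_b\}$, which is precisely where Lemma~\ref{lem:eimpliesf} is invoked.
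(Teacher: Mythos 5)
Your proposal is correct and follows essentially the same route as the paper's proof: the same block-wise identity $R_{u,T}=\sum_{b=1}^{B}\E[\Delta_{A_{b}}]$, the same split on $\EventE_{b}$ versus $\EventE_{b}^{\comp}$ with Lemma~\ref{lem:eimpliesf} converting the good-event branch into $\I\{\EventF_{b}\}$, and the same $\Deltamax\cdot\P(\EventE_{b}^{\comp})$ bound summed via $\sum_{b\geq1}b^{-2}=\pi^{2}/6$. The only cosmetic difference is that you fold the $\I\{\Delta_{A_{b}}>0\}$ indicator in at the start rather than splitting it off separately, which changes nothing of substance.
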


Let us now focus on bounding the contributions made by the high-probability term (i.e., the first term above). We first partition $\EventF_{b}$ into countably many mutually exclusive events $\{\EventG_{b,i}\}_{i}$ so that we can write
\begin{equation*}
    \Delta_{A_{b}}\I\{\EventF_{b}\}=\sum_{i=1}^{\infty}\Delta_{A_{b}}\I\{\EventG_{b,i}\},\qquad \mbox{almost surely}.
\end{equation*}
To define the events $\{\EventG_{b,i}\}_{i}$, let $\alpha=0.13$, $\beta=0.22$, and
\begin{equation*}
    \gamma=24\,\biggl(\frac{1-\beta}{\sqrt{\alpha}-\beta}\biggr)^{2}
\end{equation*}
be constants that are carefully chosen, and define
\begin{equation*}
    m_{b,i}\coloneqq\frac{\gamma\alpha^{i}U\ln(BU)}{\Delta_{A_{b}}^{2}}
\end{equation*}
for all $i\in\N$. Let
\begin{equation*}
    L_{b,i}\coloneqq\{a\in\Aminus_{b}:B_{a,b-1}\leq m_{b,i}\}
\end{equation*}
to be the set of arms in $\Aminus_{b}$ that are played for fewer than $m_{b,i}$ blocks at the beginning of block $b$. For convenience, let $L_{b,0}=\Aminus_{b}$. For each $b\in[B]$ and $i\in\N$, the event $\EventG_{b,i}$ is then defined as
\begin{equation*}
    \EventG_{b,i}\coloneqq
    \bigcap_{j=1}^{i-1}\bigl\{|L_{b,j}|<\beta^{j}U\bigr\}
    \cap
    \bigl\{|L_{b,i}|\geq\beta^{i}U \bigr\}
    \cap
    \bigl\{\Delta_{A_{b}}>0 \bigr\}.
\end{equation*}

It is clear that at most one of $\{\EventG_{b,i}\}_{i}$ can occur. However, to show that it is a partition for $\EventF_{b}$, we also need to show that at least one of $\{\EventG_{b,i}\}_{i}$ must happen. This is shown in Lemma~\ref{lem:fimpliesg}.

\begin{restatable}{lemma}{fimpliesg}
\label{lem:fimpliesg}
Assume that $b>K/U$. On the event $\EventF_{b}$, exactly one of the events in $\{\EventG_{b,i}\}_{i}$ occurs.
\end{restatable}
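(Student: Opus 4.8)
Since $\EventG_{b,i}$ picks out the smallest index $i\ge 1$ with $|L_{b,i}|\ge\beta^{i}U$, at most one of the $\{\EventG_{b,i}\}_i$ can hold, as already noted before the statement; so the entire content of the lemma is to show that, on $\EventF_b$, at least one of them holds. The plan is to argue by contraposition. Suppose none of the $\EventG_{b,i}$ occurs while $\Delta_{A_b}>0$; this is precisely the statement that $|L_{b,i}|<\beta^{i}U$ for every $i\ge 1$, together with the trivial bound $|L_{b,0}|=|\Aminus_b|\le U$. I will show that these bounds force $2\sum_{a\in\Aminus_b}\epsilon_{B,B_{a,b-1}}<\Delta_{A_b}$, which directly contradicts the defining inequality $\Delta_{A_b}\le 2\sum_{a\in\Aminus_b}\epsilon_{B,B_{a,b-1}}$ of $\EventF_b$, and hence proves that some $\EventG_{b,i}$ must occur.

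The first step is to organise $\Aminus_b$ into shells according to how many blocks each arm has been played. Here the hypothesis $b>K/U$ is essential: it guarantees that every arm has been selected at least once before block $b$, so $B_{a,b-1}\ge 1$ for all $a$; since $m_{b,i}\to 0$ as $i\to\infty$, this gives $\bigcap_i L_{b,i}=\emptyset$, and hence the nested sets $L_{b,0}\supseteq L_{b,1}\supseteq\cdots$ partition $\Aminus_b$ into the shells $L_{b,i}\setminus L_{b,i+1}$ for $i\ge 0$. An arm $a$ in the $i$-th shell satisfies $B_{a,b-1}>m_{b,i+1}$, and because $\epsilon_{B,\cdot}$ is decreasing, substituting the definition of $m_{b,i+1}$ yields the pointwise bound $\epsilon_{B,B_{a,b-1}}<\frac{\Delta_{A_b}}{U}\sqrt{6/(\gamma\alpha^{i+1})}$, which I denote $c_i$.

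The second step is to sum these contributions. Writing $n_i$ for $|L_{b,i}|$, the $i$-th shell contributes strictly less than $(n_i-n_{i+1})c_i$, and an Abel (summation-by-parts) rearrangement turns the total into $n_0c_0+\sum_{i\ge 1}n_i(c_i-c_{i-1})$, where the boundary term vanishes because $n_i=0$ for all large $i$. Since $c_i$ is increasing in $i$, every coefficient $c_0$ and $c_i-c_{i-1}$ is positive, so I may insert the upper bounds $n_0\le U$ and $n_i<\beta^{i}U$; the result is a geometric series in the ratio $\beta/\sqrt{\alpha}$. The constants $\alpha=0.13$ and $\beta=0.22$ are chosen precisely so that $\beta<\sqrt{\alpha}$, which is exactly what makes this series converge, and $\gamma=24\bigl((1-\beta)/(\sqrt{\alpha}-\beta)\bigr)^{2}$ is engineered so that, once the $\sqrt{6/\gamma}$ factor cancels the residual $(\sqrt{\alpha}-\beta)/(1-\beta)$ from the series, the whole sum collapses to the clean bound $\sum_{a\in\Aminus_b}\epsilon_{B,B_{a,b-1}}<\Delta_{A_b}/2$. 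Doubling this contradicts $\EventF_b$, completing the contraposition.

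I expect the main obstacle to be the bookkeeping in the shell sum and the exact matching of constants. A naive estimate that bounds each shell by $n_i c_i$ rather than telescoping the differences $c_i-c_{i-1}$ is too lossy and leaves a residual constant strictly larger than $\Delta_{A_b}/2$, which would prove nothing; it is the Abel rearrangement that makes the algebra collapse to exactly $1/2$, and this is the sole place where the specific numerical values of $\alpha$, $\beta$ and $\gamma$ enter. Finally, the strictness of the concluding inequality (needed to contradict the non-strict bound in $\EventF_b$) comes for free from the strict pointwise estimates $\epsilon_{B,B_{a,b-1}}<c_i$, which are applied to at least one arm because $\Delta_{A_b}>0$ forces $\Aminus_b\neq\emptyset$.
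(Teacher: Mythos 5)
Your proposal is correct and follows essentially the same argument as the paper's proof: assume no $\EventG_{b,i}$ occurs, partition $\Aminus_{b}$ into shells by play counts (using $b>K/U$ to ensure the shells exhaust $\Aminus_{b}$), apply Abel summation with the bounds $|L_{b,i}|<\beta^{i}U$, and let the geometric series in $\beta/\sqrt{\alpha}$ together with the choice of $\gamma$ collapse the sum to contradict the inequality defining $\EventF_{b}$. The only differences are cosmetic (shell indexing from $i=0$ and working directly with $\epsilon_{B,B_{a,b-1}}$ rather than $1/\sqrt{B_{a,b-1}}$).
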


Using the newly-defined events $\{\EventG_{b,i}\}_{b,i}$, we bound the contributions of the the high-probability term in Lemma~\ref{lem:upperbound-part2}. The intuition behind the events $\{\EventG_{b,i}\}_{b,i}$, which is a common construction used in the proof of combinatorial semi-bandits \cite{kveton2015a}, is that it serves to upper bound the high probability term in the regret by the number of times the set of arms $A_b$ is played. This allows us to introduce the reciprocal of the gap term for individual arms $\Delta_{a,N_a}$ which then serves to derive a meaningful problem-dependent bound on the regret. 

\begin{restatable}{lemma}{upperboundii}
\label{lem:upperbound-part2}
Let $\nu=(p_{1},\dots,p_{K})$. Suppose that $p_{a}$ is the density for a $1$-subgaussian distribution for all $a\in[K]$. Then, after $T$ time steps,
\begin{align*}
    \sum_{b=1}^{B}\E[\Delta_{A_{b}}\I\{\EventF_{b}\}]
    &=\sum_{i=1}^{\infty}\sum_{b=b_{0}}^{B}\Delta_{A_{b}}\I\{\EventG_{b,i}\}+\sum_{b=1}^{b_{0}-1}\Delta_{A_{b}}\I\{\EventF_{b}\}\\
    &\leq2136\ln(BU)\sum_{a\in\Lambda}\frac{1}{\Delta_{a,N_{a}}}+\frac{K\Deltamax}{U}\\
    &\leq\frac{2136(K-U)\ln(BU)}{\Deltamin}+\frac{K\Deltamax}{U}.
\end{align*}
for all users $u\in[U]$.
\end{restatable}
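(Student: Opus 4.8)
The plan is to convert the event‑indexed regret into a per‑arm play count. First I would fix the threshold $b_0\coloneqq\lfloor K/U\rfloor+1$, so that every block $b\geq b_0$ satisfies $b>K/U$. This choice does two jobs at once: it licenses Lemma~\ref{lem:fimpliesg}, which gives the pathwise identity $\I\{\EventF_b\}=\sum_{i=1}^{\infty}\I\{\EventG_{b,i}\}$ for $b\geq b_0$; and it guarantees that by block $b_0$ every arm has been pulled at least once (arms with $\UCB=\infty$ are always selected first, so the first $\lceil K/U\rceil$ blocks exhaust all $K$ arms), hence $B_{a,b-1}\geq1$ for all $a$ and all $b\geq b_0$. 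Splitting the sum at $b_0$ and interchanging the (nonnegative) sums over $i$ and $b$ yields the first displayed equality. The low‑block remainder is then bounded trivially, since each block contributes at most $\Deltamax$:
\begin{equation*}
    \sum_{b=1}^{b_0-1}\Delta_{A_b}\I\{\EventF_b\}\leq(b_0-1)\Deltamax\leq\frac{K}{U}\Deltamax,
\end{equation*}
which supplies the additive term in the claim.

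The core is to control $\sum_{b\geq b_0}\Delta_{A_b}\I\{\EventG_{b,i}\}$ for each fixed $i$. On $\EventG_{b,i}$ there are at least $\beta^iU$ arms of $\Aminus_b$ lying in $L_{b,i}$, and membership $a\in L_{b,i}$ means $B_{a,b-1}\leq m_{b,i}=\gamma\alpha^iU\ln(BU)/\Delta_{A_b}^2$, which rearranges into the per‑arm certificate $\Delta_{A_b}\leq\sqrt{\gamma\alpha^iU\ln(BU)/B_{a,b-1}}$. Averaging $\Delta_{A_b}$ over the $\geq\beta^iU$ arms of $L_{b,i}$ and inserting the certificate gives
\begin{equation*}
    \Delta_{A_b}\I\{\EventG_{b,i}\}\leq\frac{\I\{\EventG_{b,i}\}}{\beta^iU}\sum_{a\in L_{b,i}}\sqrt{\frac{\gamma\alpha^iU\ln(BU)}{B_{a,b-1}}}.
\end{equation*}

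Next I would sum over $b\geq b_0$ and exchange the order of summation, fixing a suboptimal arm $a\in\Lambda\coloneqq[K]\setminus A_*$ and collecting the terms $B_{a,b-1}^{-1/2}$ over the blocks in which it is charged. Since $a$ is pulled at most once per block, $B_{a,b-1}$ runs through distinct integers $1,2,\dots,N_a$, so the telescoping bound $\sum_{n=1}^{N_a}n^{-1/2}\leq 2\sqrt{N_a}$ applies; the membership constraint caps $N_a$ by $\gamma\alpha^iU\ln(BU)/\Delta_{a,N_a}^2$, whence $\sqrt{N_a}\leq\sqrt{\gamma\alpha^iU\ln(BU)}/\Delta_{a,N_a}$. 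Multiplying the two square‑root factors collapses the $U$ and $\ln(BU)$ dependence, leaving the clean per‑level bound
\begin{equation*}
    \sum_{b=b_0}^{B}\Delta_{A_b}\I\{\EventG_{b,i}\}\leq 2\gamma\ln(BU)\Bigl(\tfrac{\alpha}{\beta}\Bigr)^{i}\sum_{a\in\Lambda}\frac{1}{\Delta_{a,N_a}}.
\end{equation*}
Because $\alpha<\beta$, summing $\sum_{i\geq1}(\alpha/\beta)^i=\alpha/(\beta-\alpha)$ and checking that $2\gamma\,\alpha/(\beta-\alpha)\leq2136$ for the prescribed $\alpha=0.13,\ \beta=0.22,\ \gamma=24\bigl((1-\beta)/(\sqrt{\alpha}-\beta)\bigr)^2$ produces the middle line. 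The final line is immediate: every charged block has $A_b$ suboptimal, so $\Delta_{a,N_a}\geq\Deltamin$, and $\lvert\Lambda\rvert=K-U$, giving $\sum_{a\in\Lambda}\Delta_{a,N_a}^{-1}\leq(K-U)/\Deltamin$.

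The step I expect to be the main obstacle is the bookkeeping in the summation interchange. The threshold $m_{b,i}$ depends on the block through $\Delta_{A_b}$, so I must define the per‑arm counter $N_a$ and its associated gap $\Delta_{a,N_a}$ carefully enough that this block‑dependent cap becomes a single $i$‑independent quantity per arm, while simultaneously ensuring that the charged values of $B_{a,b-1}$ are genuinely distinct (so that $\sum n^{-1/2}\leq2\sqrt{N_a}$ is legitimate) and that no pull is double‑counted across the disjoint events $\{\EventG_{b,i}\}_i$. Getting this aggregation right is what makes the geometric series in $i$ and the per‑arm gap term decouple cleanly.
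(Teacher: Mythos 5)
Your proof is correct and reaches the same constants as the paper, but the core counting step takes a genuinely different route. Both arguments share the same skeleton: split at $b_{0}$, invoke Lemma~\ref{lem:fimpliesg} to pass from $\EventF_{b}$ to the partition $\{\EventG_{b,i}\}_{i}$, reduce everything to a per-arm, per-level charge of $2\gamma(\alpha/\beta)^{i}\ln(BU)/\Delta_{a,N_{a}}$, and sum the geometric series in $i$ (your $2\gamma\alpha/(\beta-\alpha)\approx 2135<2136$ is exactly the paper's numerical check). Where you diverge is in how that per-arm charge is extracted. The paper introduces arm-dependent events $\EventG_{b,i,a}=\EventG_{b,i}\cap\{a\in\Aminus_{b}\}\cap\{B_{a,b-1}\leq m_{b,i}\}$, sorts the distinct sub-optimal sets containing $a$ by gap as $\Delta_{a,1}\geq\dots\geq\Delta_{a,N_{a}}$, bounds how often each set can be charged via a worst-case-realization argument, and collapses the resulting weighted sum $\frac{1}{\Delta_{a,1}}+\sum_{j\geq2}\Delta_{a,j}\bigl(\Delta_{a,j}^{-2}-\Delta_{a,j-1}^{-2}\bigr)$ to $2/\Delta_{a,N_{a}}$ by a telescoping (Abel-summation) estimate. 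You instead invert the membership constraint $B_{a,b-1}\leq m_{b,i}$ into the pointwise certificate $\Delta_{A_{b}}\leq\sqrt{\gamma\alpha^{i}U\ln(BU)/B_{a,b-1}}$, note that the counter values $B_{a,b-1}$ over charged blocks are strictly increasing (every charged block plays $a$, since $L_{b,i}\subseteq\Aminus_{b}\subseteq A_{b}$), and apply $\sum_{n\leq N}n^{-1/2}\leq2\sqrt{N}$ with the cap $N\leq\gamma\alpha^{i}U\ln(BU)/\Delta_{a,N_{a}}^{2}$ supplied by $\Delta_{A_{b}}\geq\Delta_{a,N_{a}}$. This is more elementary: it dispenses with the sorted-sets bookkeeping, the events $\EventG_{b,i,a}$, and the somewhat informal ``worst-case realization'' reasoning, at no cost in the constant. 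Two small points: the counters need only be \emph{distinct} positive integers, not consecutive ones as your phrase ``runs through $1,2,\dots,N_{a}$'' suggests, but distinctness alone legitimizes the $2\sqrt{N}$ bound; and your $b_{0}=\lfloor K/U\rfloor+1$ inherits the paper's own off-by-one when $U\nmid K$ (one really needs $b_{0}=\lceil K/U\rceil+1$ to guarantee $B_{a,b-1}\geq1$, the property both proofs rely on), which costs at most one extra block of regret and is absorbed by the slack in the $K\Deltamax/U$ term.
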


The proof of Theorem~\ref{thm:upperbound-dependent} simply involves substituting the result of \ref{lem:upperbound-part2} into Lemma~\ref{lem:upperbound-part1}.

\begin{proof}[Proof of Theorem~\ref{thm:upperbound-dependent}]
We can bound the regret by
\begin{align*}
    R_{u,T}
    \leq\sum_{b=1}^{B}\E[\Delta_{A_{b}}\I\{\EventF_{b}\}]+\frac{\pi^{2}K\Deltamax}{3U^{3}}
    &\leq\frac{2136(K-U)\ln(BU)}{\Deltamin}+\frac{K\Deltamax}{U}+\frac{\pi^{2}K\Deltamax}{3U^{3}}\\
    &\leq\frac{2136(K-U)\ln(T)}{\Deltamin}+\frac{4K\Deltamax}{U}
\end{align*}
where the first inequality holds due to Lemma~\ref{lem:upperbound-part1} and the second inequality holds due to Lemma~\ref{lem:upperbound-part2}.
\end{proof}

\subsection{Problem-Independent Upper Bound}
\label{ssec:upperboundgapfree}

Much like the expression in Theorem~\ref{thm:upperbound-dependent}, the first term of the regret bound in Theorem~\ref{thm:upperbound-independent} arises from the cumulative regret accumulated in blocks where the high-probability event $\EventE_{b}$ occurs. Conversely, the second term arises from the regret accumulated in the initial blocks and the blocks where $\EventE_{b}$ did not occur. The proof of Theorem~\ref{thm:upperbound-independent} can be found in Appendix~\ref{sec:upperproofs}.

\subsection{Policy-Independent Lower Bound}

Our proof for Theorem~\ref{thm:lowerbound} consists of constructing two \EgalMAB{} instances $\nu$ and $\nu'$ that are ``close'' enough that it is difficult for any policy to distinguish between them statistically, yet ``far'' enough that a sequence of actions that is good for one instance is bad for the other. Specifically, let 
\begin{equation*}
    \Delta=\sqrt{\frac{K-U}{8TU^2}}
\end{equation*}
and set $\nu=(p_{1},\dots,p_{K})\in\V$ to be an \EgalMAB{} instance with
\begin{equation}
    \mu_{a}=\begin{cases}
        \Delta, & \text{if $a\in [U]$} \\
        0, & \text{otherwise.}
    \end{cases}
    \label{eq:mu}
\end{equation}
Assuming that $K\geq2U$, let
\begin{equation*}
    A'=\argmin_{A\subseteqU[K]\backslash[U]}\sum_{a\in A}\E_{\pi\nu}[T_{a,T}]
\end{equation*}
be the set of $U$ arms that are sub-optimal under $\nu$ have been played the fewest number of times under the distribution $\P_{\pi\nu}$. Set $\nu'\in\V$ to be an \EgalMAB{} instance with
\begin{equation}
    \mu'_{a}=\begin{cases}
        2\Delta, & \text{if $a\in A'$} \\
        \mu_{a}, & \text{otherwise}.
    \end{cases}
    \label{eq:muprime}
\end{equation}

Let $\BoGamma$ be the set of size-$U$ subsets of $[K]$ that contains at least $U/2$ arms from $[U]$, and let
\begin{equation*}
    \EventH\coloneqq\Biggl\{\sum_{A\in\BoGamma}T_{A,T}\leq\frac{T}{2}\Biggr\}
\end{equation*}
be the event that, for at least $T/2$ time steps, the policy $\pi$ selects a set in which at least half of it consists of arms from $[U]$. Lemma~\ref{lem:lowerbound-part1} uses the Bretagnolle--Huber inequality with $\EventH$ to reduce the lower bound computation to evaluating the KL-divergence between $\P_{\pi\nu}$ and $\P_{\pi\nu'}$.

\begin{restatable}{lemma}{lowerboundi}
\label{lem:lowerbound-part1}
Let $\nu$ and $\nu'$ be the \EgalMAB{} instances defined by \eqref{eq:mu} and \eqref{eq:muprime}. Under the assumptions of Theorem~\ref{thm:lowerbound}, we have
\begin{align*}
    R_{\pi\nu}+R_{\pi\nu'}
    &>\frac{\Delta T}{4}\bigl(\P_{\pi\nu}(\EventH)+\P_{\pi\nu'}(\EventH^{\comp})\bigr)\\
    &\geq\frac{\Delta T}{8}\exp\bigl(-\KL(\P_{\pi\nu}\|\P_{\pi\nu'})\bigr).
\end{align*}
\end{restatable}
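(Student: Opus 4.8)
The plan is to establish the two displayed inequalities in turn. The second is an immediate application of the Bretagnolle--Huber inequality to the event $\EventH$ under the laws $\P_{\pi\nu}$ and $\P_{\pi\nu'}$, namely $\P_{\pi\nu}(\EventH)+\P_{\pi\nu'}(\EventH^{\comp})\ge\frac{1}{2}\exp(-\KL(\P_{\pi\nu}\|\P_{\pi\nu'}))$, so the real work is the first inequality. For that I would prove the sharper per-instance bounds $R_{\pi\nu}\ge\frac{\Delta T}{4}\P_{\pi\nu}(\EventH)$ and $R_{\pi\nu'}\ge\frac{\Delta T}{4}\P_{\pi\nu'}(\EventH^{\comp})$ and then add them.

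Both per-instance bounds begin from the same reduction. Since $\min_{u}\E[S_{u,T}]\le\frac{1}{U}\sum_{u}\E[S_{u,T}]$ and the expected total reward equals $\sum_{a}\mu_{a}\E[T_{a,T}]$, I can replace the awkward $\min$ by the average and lower bound the regret by the round-robin optimum minus $\frac{1}{U}\sum_{a}\mu_{a}\E[T_{a,T}]$. The combinatorial heart is to count, per time step, the overlap between the size-$U$ set played and the optimal support. For $\nu$ (good arms $[U]$, round-robin optimum $\Delta T$), a set in $\BoGamma$ contributes at most $U$ to $\sum_{a\in[U]}T_{a,T}$ while a set outside $\BoGamma$ contributes strictly fewer than $U/2$; writing $N_G\coloneqq\sum_{A\in\BoGamma}T_{A,T}$ gives $\sum_{a\in[U]}T_{a,T}<\frac{TU}{2}+\frac{U}{2}N_G$. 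Taking expectations under $\P_{\pi\nu}$ and using $\E[N_G]\le T-\frac{T}{2}\P_{\pi\nu}(\EventH)$ (from $N_G\le T$ always and $N_G\le T/2$ on $\EventH$) collapses to $R_{\pi\nu}\ge\frac{\Delta T}{4}\P_{\pi\nu}(\EventH)$.

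For $\nu'$ (good arms $A'\subseteq[K]\backslash[U]$ with mean $2\Delta$, round-robin optimum $2\Delta T$) the same bookkeeping is applied to the $A'$-overlap. A set in $\BoGamma$ meets $A'$ in at most $U/2$ arms, since at least $U/2$ of its arms lie in $[U]$, which is disjoint from $A'$; this gives $\sum_{a\in A'}T_{a,T}\le TU-\frac{U}{2}N_G$. Combining with the trivial $\sum_{a\in A'}T_{a,T}+\sum_{a\in[U]}T_{a,T}\le TU$ and the means $2\Delta$ (on $A'$) and $\Delta$ (on $[U]$) yields $R_{\pi\nu'}\ge\frac{\Delta}{2}\E_{\pi\nu'}[N_G]\ge\frac{\Delta T}{4}\P_{\pi\nu'}(\EventH^{\comp})$, where the last step uses $N_G>T/2$ on $\EventH^{\comp}$. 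Here only the existence of $A'$ (which needs $K\ge2U$) and its disjointness from $[U]$ are used; the minimality built into the definition of $A'$ is reserved for controlling $\KL(\P_{\pi\nu}\|\P_{\pi\nu'})$ in the subsequent step, not here.

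Adding the two bounds gives the first inequality, and substituting Bretagnolle--Huber gives the factor $\frac{\Delta T}{8}\exp(-\KL(\P_{\pi\nu}\|\P_{\pi\nu'}))$. I expect the main obstacle to be the overlap accounting: one must track $\E[T_{a,T}]$ and $\E[N_G]$ under the correct instance law, apply the $\min$-over-users reduction and the strict ``fewer than $U/2$'' count in the direction that \emph{lower}-bounds each regret, and verify that weighting by the two distinct means together with $\sum_{a}T_{a,T}=TU$ produces exactly the coefficient $\Delta T/4$ in both cases, with the strict inequality sign inherited from the strict overlap bound for sets outside $\BoGamma$.
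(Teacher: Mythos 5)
Correct, and essentially the paper's own argument: both proofs establish the per-instance bounds $R_{\pi\nu}\geq\frac{\Delta T}{4}\P_{\pi\nu}(\EventH)$ and $R_{\pi\nu'}\geq\frac{\Delta T}{4}\P_{\pi\nu'}(\EventH^{\comp})$ by counting the overlap of each played set with $[U]$ (resp.\ with $A'$, using $A'\cap[U]=\emptyset$) and then invoke the Bretagnolle--Huber inequality, your min-to-average reduction being just an explicit, expectation-level rendering of the paper's pigeonhole step. The only nit is that your claimed strict inequality $\sum_{a\in[U]}T_{a,T}<\frac{TU}{2}+\frac{U}{2}N_G$ fails in the edge case $N_G=T$, so your chain delivers $\geq$ rather than the stated $>$; this is immaterial downstream, since the constant in Theorem~\ref{thm:lowerbound} has enough slack to absorb it.
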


Part of the novelty in our analysis involves reducing the computation of this KL-divergence to a combinatorial problem. Lemma~\ref{lem:kldecompose} decompose the KL-divergence from Lemma~\ref{lem:lowerbound-part1} into an expression that involves counting the number of times each arm $a\in A'$ is played, and Lemma~\ref{lem:tatcombi} shows, using a counting argument, that the number of times an arm $a\in A'$ is played is at most $TU^2/(K-U)$.

\begin{restatable}{lemma}{kldecompose}
\label{lem:kldecompose}
Let $\nu$ and $\nu'$ be the \EgalMAB{} instances defined by \eqref{eq:mu} and \eqref{eq:muprime}. Under the assumptions of Theorem~\ref{thm:lowerbound}, we have
\begin{equation*}
    \KL(\P_{\pi\nu}\|\P_{\pi\nu'})
    \leq4\Delta^{2}\sum_{a'\in A'}\E_{\pi\nu}[T_{a',T}].
\end{equation*}
\end{restatable}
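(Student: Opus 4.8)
The plan is to establish the standard divergence-decomposition identity for bandit interaction measures, adapted to our multi-user setting, and then specialize to Gaussian arms. The starting point is the explicit product form of the density $p_{\pi\nu}$ stated in Section~\ref{sec:problem}. Since the policy $\pi=(\pi_{t})_{t}$ is \emph{fixed} and does not depend on the underlying instance, the kernels $\pi_{t}(A_{t}\mid A_{1},X_{1},\dots,A_{t-1},X_{t-1})$ appearing in $p_{\pi\nu}$ and $p_{\pi\nu'}$ are identical, so they cancel in the log-likelihood ratio, leaving
\begin{equation*}
    \log\frac{p_{\pi\nu}}{p_{\pi\nu'}}
    =\sum_{t=1}^{T}\sum_{u=1}^{U}\log\frac{p_{A_{u,t}}(X_{u,t})}{p'_{A_{u,t}}(X_{u,t})}.
\end{equation*}
By the construction of $\nu$ and $\nu'$ in \eqref{eq:mu} and \eqref{eq:muprime}, we have $p_{a}=p'_{a}$ for every $a\notin A'$, so only the terms with $A_{u,t}\in A'$ survive.

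Next I would take the expectation under $\P_{\pi\nu}$ and apply the tower rule. For each pair $(t,u)$, conditioned on the history up to the point where user $u$'s reward at time $t$ is drawn, the assignment $A_{u,t}$ is measurable while $X_{u,t}$ is drawn \emph{freshly} from $p_{A_{u,t}}$ (this is exactly what the factor $\prod_{u}p_{A_{u,t}}(X_{u,t})$ encodes). Hence the conditional expectation of each surviving summand equals $\KL(p_{A_{u,t}}\,\|\,p'_{A_{u,t}})$ whenever $A_{u,t}\in A'$. Summing the indicator $\I\{A_{u,t}=a\}$ over all $(t,u)$ recovers $T_{a,T}$, the total number of plays of arm $a$, which gives the decomposition
\begin{equation*}
    \KL(\P_{\pi\nu}\|\P_{\pi\nu'})
    =\sum_{a\in A'}\E_{\pi\nu}[T_{a,T}]\,\KL(p_{a}\,\|\,p'_{a}).
\end{equation*}

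It then remains to evaluate the single-arm divergence. For $a\in A'\subseteq[K]\backslash[U]$ we have $\mu_{a}=0$ under $\nu$ and $\mu'_{a}=2\Delta$ under $\nu'$, both with unit variance, so the Gaussian formula yields $\KL(\mathcal{N}(0,1)\,\|\,\mathcal{N}(2\Delta,1))=(2\Delta)^{2}/2=2\Delta^{2}\leq4\Delta^{2}$. Substituting this bound into the decomposition gives the claimed inequality.

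The main obstacle is not the algebra but the measure-theoretic bookkeeping needed to make the tower-rule step rigorous in the multi-play regime: unlike the classical single-play bandit, each round draws $U$ rewards simultaneously, so I must argue that, conditioned on $A_{t}$ and the past, the rewards $(X_{u,t})_{u\in[U]}$ are mutually independent with $X_{u,t}\sim p_{A_{u,t}}$, and that each $A_{u,t}$ is history-measurable so the fresh reward is independent of the event $\{A_{u,t}=a\}$. Carefully threading the filtration through the $U$ parallel pulls per round — so that the conditional expectations collapse to single-arm KL terms and the indicators aggregate into $T_{a,T}$ — is the part that genuinely requires the formal probability-kernel definitions introduced in Section~\ref{sec:problem}.
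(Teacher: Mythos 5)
Your proposal is correct and takes essentially the same approach as the paper's proof: the policy kernels cancel in the likelihood ratio, the tower rule collapses the divergence into per-arm KL terms weighted by expected play counts (the paper routes this through per-set counts $T_{A,T}$ before re-aggregating to $T_{a',T}$, but that is only bookkeeping), and the Gaussian formula finishes. The one harmless discrepancy is the constant: you correctly compute $\KL(\mathcal{N}(0,1)\,\|\,\mathcal{N}(2\Delta,1))=2\Delta^{2}\leq4\Delta^{2}$, whereas the paper's proof takes the per-arm divergence to be $(2\Delta)^{2}=4\Delta^{2}$, so your argument in fact yields the slightly stronger bound $2\Delta^{2}\sum_{a'\in A'}\E_{\pi\nu}[T_{a',T}]$.
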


\begin{restatable}{lemma}{tatcombi}
\label{lem:tatcombi}
Let $\nu$ and $\nu'$ be the \EgalMAB{} instances defined in \eqref{eq:mu} and \eqref{eq:muprime}. Under the assumptions of Theorem~\ref{thm:lowerbound}, we have
\begin{equation*}
    \sum_{a\in A'}T_{a,T}\leq\frac{TU^{2}}{K-U}
\end{equation*}
almost surely.
\end{restatable}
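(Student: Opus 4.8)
The plan is to reduce the claim to a short combinatorial averaging argument built on the hard constraint of the assignment: at every time step the policy assigns $U$ \emph{distinct} arms to the $U$ users, so each round contributes exactly $U$ to the total play count. First I would record the deterministic identity $\sum_{a\in[K]}T_{a,T}=UT$, which holds for every outcome (not just in expectation), since $T_{a,T}$ counts arm pulls across all users and each of the $T$ rounds pulls $U$ distinct arms once each. Discarding the optimal arms gives $\sum_{a\in[K]\setminus[U]}T_{a,T}\leq UT$ surely, and taking expectations under $\P_{\pi\nu}$ yields
\begin{equation*}
    \sum_{a\in[K]\setminus[U]}\E_{\pi\nu}[T_{a,T}]\leq UT.
\end{equation*}

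Next I would unpack the definition $A'=\argmin_{A\subseteqU[K]\setminus[U]}\sum_{a\in A}\E_{\pi\nu}[T_{a,T}]$: the minimising size-$U$ subset of the $K-U$ suboptimal arms is exactly the collection of the $U$ suboptimal arms with the smallest expected play counts. The assumption $K\geq2U$ guarantees $K-U\geq U$, so such a subset exists and the minimiser is well defined. The elementary fact that the mean of the $U$ smallest among $K-U$ nonnegative numbers is at most their overall mean then gives
\begin{equation*}
    \sum_{a\in A'}\E_{\pi\nu}[T_{a,T}]\leq\frac{U}{K-U}\sum_{a\in[K]\setminus[U]}\E_{\pi\nu}[T_{a,T}]\leq\frac{U}{K-U}\cdot UT=\frac{U^{2}T}{K-U},
\end{equation*}
which is the stated bound (on expected counts). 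This is precisely the quantity that Lemma~\ref{lem:kldecompose} feeds into the KL bound, so this expectation form is what Theorem~\ref{thm:lowerbound} ultimately consumes.

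The one genuinely delicate point is the almost-sure phrasing. The averaging step controls only the \emph{expected} counts $\E_{\pi\nu}[T_{a,T}]$; a literal outcome-wise bound on the random sum $\sum_{a\in A'}T_{a,T}$ does not follow from averaging, because $A'$ is a \emph{fixed} set selected from expected counts, and on an atypical realisation a (say, randomised) policy could concentrate its suboptimal pulls on $A'$ and violate the bound. I would therefore either read the lemma in expectation—which is all that is needed downstream—or, if a surely-statement is truly required, tighten it by invoking the deterministic identity $\sum_{a}T_{a,T}=UT$ in a way that does not single out the fixed set $A'$. I expect the only real effort to lie in cleanly justifying the smallest-$U$-average inequality and matching it to the $\argmin$ definition of $A'$; the underlying counting identity is immediate from the one-arm-per-user constraint.
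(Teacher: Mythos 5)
Your argument is, at its core, the same counting argument the paper uses. The paper proceeds by contradiction, double-counting $\sum_{A\subseteqU[K]\backslash[U]}\sum_{a\in A}T_{a,T}$ over all size-$U$ subsets of the suboptimal arms (each arm lies in the same number of such subsets, so the minimizing subset's sum cannot exceed the average subset sum) together with the identity $\sum_{a\in[K]}T_{a,T}=TU$; your ``mean of the $U$ smallest is at most the overall mean'' step is exactly that averaging, stated directly rather than via contradiction. So in substance the two proofs coincide.

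Your flag about the almost-sure phrasing is well founded, and it points at a genuine imprecision in the paper's own proof rather than a gap in yours. The lemma is stated for the realized counts $T_{a,T}$, but $A'$ is defined as the minimizer of the \emph{expected} counts, and the paper's step ``since $A'$ is the set of least played arms'' silently treats $A'$ as the realization-wise least-played set. For a randomized policy these can differ: take $K=3U$ and a policy that flips a (slightly biased) coin once and then plays either $S_1=\{U+1,\dots,2U\}$ or $S_2=\{2U+1,\dots,3U\}$ in every round, with $S_2$ marginally more likely; then $A'=S_1$ by the expectation-based definition, yet on the positive-probability event that $S_1$ is played throughout, $\sum_{a\in A'}T_{a,T}=TU>TU^{2}/(K-U)=TU/2$. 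So the almost-sure form is false as literally stated, while the expectation form $\sum_{a\in A'}\E_{\pi\nu}[T_{a,T}]\leq TU^{2}/(K-U)$ --- which you prove, and which the paper's own argument yields once every $T_{a,T}$ is replaced by $\E_{\pi\nu}[T_{a,T}]$ (making $A'$ the true minimizer by definition) --- is correct and is exactly what the proof of Theorem~\ref{thm:lowerbound} consumes through Lemma~\ref{lem:kldecompose}. Reading the lemma in expectation, as you do, is the right fix.
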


The proof of Theorem~\ref{thm:lowerbound} simply involves substituting the results of Lemma~\ref{lem:kldecompose} and Lemma~\ref{lem:tatcombi} into Lemma~\ref{lem:lowerbound-part1}.

\begin{proof}[Proof of Theorem~\ref{thm:lowerbound}]
We have
\begin{align*}
R_{T,\pi,\nu}+R_{T,\pi,\nu'}
&\geq\frac{\Delta T}{8}\exp\Biggl(-4\Delta^{2}\sum_{a'\in A'}\E_{\pi\nu}[T_{a',T}]\Biggr)\geq\frac{\Delta T}{8}\exp\Biggl(-\frac{4\Delta^2TU^2}{K-U}\Biggr)\\
&=\frac{\Delta T}{8}\exp(-1/2)>\frac{\sqrt{T(K-U)}}{38U}.
\end{align*}
Since $2\max\{R_{T,\pi,\nu},R_{T,\pi,\nu'}\}\geq R_{T,\pi,\nu}+R_{T,\pi,\nu'}$, dividing by $2$ concludes the proof.
\end{proof}


\section{Experiments}
\label{sec:experiments}

In this section, we present the results of our numerical experiments to validate the analysis of \EgalUCB{}. These experiments include both a synthetic environment (Section~\ref{sec:synthetic}) and real-world datasets (Sections~\ref{sec:clusterusage}~and~\ref{sec:movielens}). The code for these experiments can be found in the supplementary materials.

\subsection{Synthetic Experiments}
\label{sec:synthetic}

\begin{figure*}[t]
    \centering
    \includegraphics[width=\textwidth]{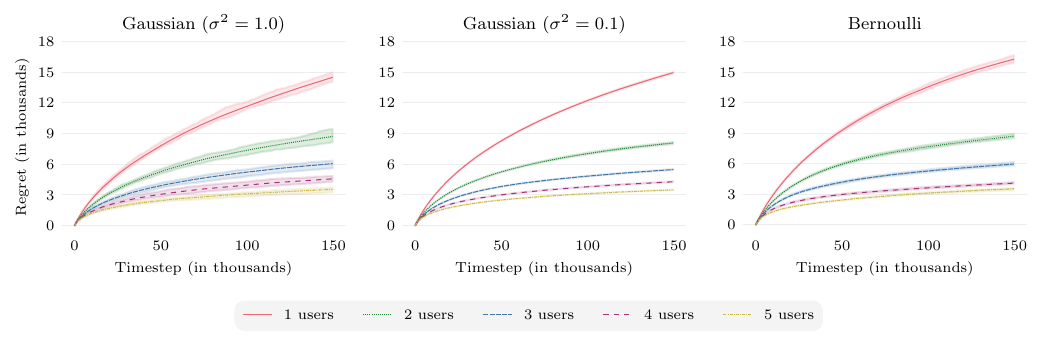}
    \caption{Expected regret incurred by \EgalUCB{} over $T=\num{150000}$ time steps on simulated data with $K=10$. Each line corresponds to a different $U$. The lighter region around  each line represents the range between the minimum and maximum expected regrets observed over a total of 30 independent runs.}
    \label{fig:regret-simulation}
\end{figure*}

To empirically verify the problem-independent upper bound in Theorem~\ref{thm:upperbound-independent}, we conducted experiments on three synthetic datasets: Gaussian bandits with variance $\sigma^{2}=1.0$, Gaussian bandits with variance $\sigma^{2}=0.1$ and Bernoulli bandits.

In each environment, we fixed $K=10$ and varied $U$ from $1$~to~$5$. For each choice of $U$, we ran the experiment $30$ times. In each run, we randomly generated the ground truth expected reward $\mu_{a}$ for each arm $a$ using a uniform distribution with support $[0.01,0.99]$. Figure~\ref{fig:regret-simulation} shows the expected regret incurred by \EgalUCB{} over $T=\num{150000}$ time steps. As predicted by Theorem~\ref{thm:upperbound-independent}, the expected regret $R_{T}$ is sub-linear in $T$ and diminishes with $U$.

\begin{figure}[t]
    \begin{minipage}[t]{0.475\columnwidth}
        \centering
        \includegraphics[height=120pt]{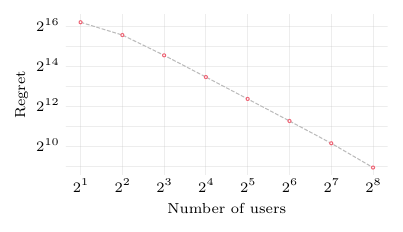}
        \caption{Expected regret incurred by \EgalUCB{} over $T=2^{18}$ time steps on Bernoulli bandits with $K=2^{10}$ arms.}
        \label{fig:user-bernoulli-v1}
    \end{minipage}\hfill
    \begin{minipage}[t]{0.475\columnwidth}
        \centering
        \includegraphics[height=120pt]{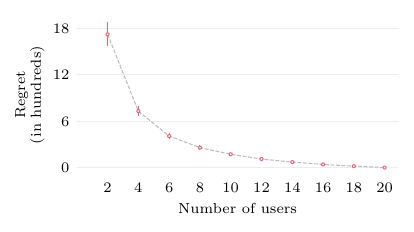}
        \caption{Expected regret incurred by \EgalUCB{} over $T=\num{126000}$ time steps on Bernoulli bandits with $K=20$ arms.}
        \label{fig:user-bernoulli-v2}
    \end{minipage}
\end{figure}

We conducted another experiment to verify the rate at which $R_{T}$ diminishes with $U$ when $K\gg U$. We fixed $K=2^{10}$ and $T=2^{18}$ while varying $U\in\{2^{1},\dots,2^{8}\}$. We ran \EgalUCB{} on instances of Bernoulli \EgalMAB{} where $\mu_{a}=0.8$ if $a\in[U]$ and $0.5$ otherwise. This choice of $\mu_{a}$ ensures that $\mu_{*}$ is kept constant for all choices of $U$. Figure~\ref{fig:user-bernoulli-v1} shows the log-log plot of $R_{T}$ against $U$. We observe that $R_{T}$ diminishes with $U$ at a rate of $O(U^{-c})$ for some constant $c\approx1.0$. This corroborates with our policy-independent lower bound in Theorem~\ref{thm:lowerbound} and suggests that our problem-independent upper bound in Theorem~\ref{thm:upperbound-independent} may be loose.

To assess the rate at which $R_{T}$ diminishes with $U$ when $U\to K$, we ran a similar experiment with $T=\num{126000}$, $K=20$, and $U\in\{2,4,\dots,18,20\}$ on Bernoulli \EgalMAB{} instances. For each $U$, we ran the experiment $30$ times. Figure~\ref{fig:user-bernoulli-v2} shows the plot of $R_{T}$ against $U$. We observe that as $U$ approaches $K$, the regret decreases to $0$, and specifically when $U=K$, the regret is exactly $0$. This observation aligns with the problem-independent upper bound in Theorem~\ref{thm:upperbound-independent}.

\subsection{Google Cluster Usage Trace Dataset}
\label{sec:clusterusage}

\begin{figure}[t]
    \begin{minipage}[t]{0.475\columnwidth}
        \centering
        \includegraphics[height=120pt]{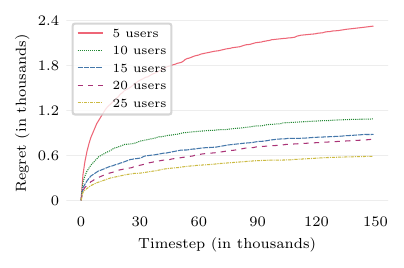}
        \caption{Expected regret incurred by \EgalUCB{} over $T=\num{150000}$ time steps on the Google Cluster Usage Trace dataset with $K=100$. Each line is associated with a different number of users $U$.}
        \label{fig:clusterusage}
    \end{minipage}\hfill
    \begin{minipage}[t]{0.475\columnwidth}
        \centering
        \includegraphics[height=120pt]{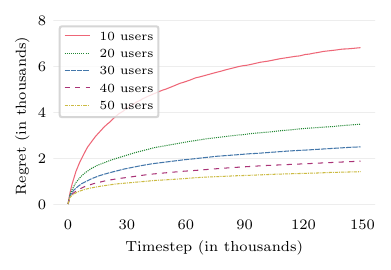}
        \caption{Expected regret incurred by \EgalUCB{} over $T=\num{150000}$ time steps on the MovieLens 25M dataset with $K=500$. Each line is associated with a different number of users $U$.}
        \label{fig:movielens}
    \end{minipage}
\end{figure}

The Google Cluster Usage Traces dataset comprises 2.4 TiB of compressed traces that record the workloads executed on Google compute cells~\citep{wilkes2020}. These traces are organized into tables that contain information about the machines and the instances running on them.

In our experiment, we focus on the \textit{InstanceUsage} table from the \texttt{clusterdata\_2019} trace. This table contains traces of both processor and memory usage during instance execution. To adapt to the \EgalMAB{} setting, we designate each arm $a$ as a machine, uniquely identifiable using the \texttt{machine\_id} field in the table. We implicitly construct its reward distribution $\P_{a}$ by drawing an entry uniformly from the trace that corresponds to the machine and return the negative of the \texttt{cycles\_per\_instruction} field for its reward. Given the substantial size of the dataset, we will limit our analysis to the initial $4$ million entries from the table. We pick $K=100$ machines that contain the most amount of trace entries. Then, we formulate scenarios involving $U\in\{5,10,15,20,25\}$ \emph{unseen} users. At each time step $t\in[T]$ where $T=\num{150000}$, we employ the \EgalUCB{} policy to assign machines to these users.

Figure~\ref{fig:clusterusage} shows the expected regret $R_{T}$ over time. Since the distribution $\P_{a}$ has bounded support, it is a subgaussian distribution. This empirical finding supports the $O(\sqrt{T\ln(T)})$ growth rate that Theorem~\ref{thm:upperbound-independent} predicts.

\subsection{MovieLens 25M Dataset}
\label{sec:movielens}

The MovieLens 25M dataset is widely used in recommender systems~\citep{kuzelewska2014,forouzandeh2021} and collaborative filtering~\citep{he2017,gonzalez2022} research. This dataset encompasses a substantial collection of $\num{25000000}$ user ratings contributed by \num{162000} users for a repository of \num{62000} movies.

To adapt to the \EgalMAB{} setting, we randomly select $K=500$ movies and treat them as arms. For each movie $a\in[K]$, we implicitly construct its reward distribution $\P_{a}$ using the user ratings provided by existing users in the dataset. This empirical distribution is categorical and has support residing within $\{0.5,1.0,\dots,4.5,5.0\}$. Then, we formulate a scenario involving $U\in\{10,20,30,40,50\}$ \emph{unseen} users. At each time step $t\in[T]$ where $T=150,000$, we employ the \EgalUCB{} policy to assign movies to these users.

Figure~\ref{fig:movielens} shows the expected regret $R_{T}$ over time. Similar to the Google Cluster Usage Trace dataset, the distribution $\P_{a}$ is subgaussian for all machines $a$. As such, this empirical finding aligns with the $O(\sqrt{T\ln(T)})$ growth rate that Theorem~\ref{thm:upperbound-independent} predicts.


\section{Discussion}

In this work, we introduced \EgalMAB{}, an extension to the MAB framework with egalitarian considerations. The \EgalUCB{} policy was proposed and shown to achieve an expected regret of $\smash{O\bigl(\sqrt{T\ln(T)\cdot(K-U)\cdot U^{-1}}\bigr)}$.
We also derived a lower bound that matches the upper bound up to a multiplicative gap of $1/\sqrt{U}$ and a term logarithmic in $T$. Our experiments on simulated and real-world data validated the theoretical analysis. Our empirical results lead us to conjecture that \EgalUCB{} is indeed tight with respect to $U$. This gap could potentially be reconciled with a more refined analysis of the upper bound in future work. Other future works include:

\textbf{Adversarial semi-bandits.\quad} It is natural to consider the adversarial variant of our setup. After choosing a randomized assignment at each time step, an adaptive online adversary chooses the reward for each arm. Since that the set of all randomized assignments $\mathcal{B}$ (a.k.a.\ the Birkhoff polytope) is a convex set and the expected reward given a randomized assignment is a convex function, we conjecture that a modification of \texttt{Component Hedge}~\citep{koolen2010} and \texttt{PermELearn}~\citep{helmbold2009} can achieve an asymptotically near-optimal solution under the egalitarian consideration.

\textbf{Thompson Sampling.\quad} Another possible direction is to develop a Thompson sampling approach for the \EgalMAB{} problem, potentially by adapting Combinatorial Thompson Sampling~\citep{wang18}.

\textbf{Arms with capacity.\quad} Suppose that at each time step, we can assign at most $C$ users to each arm. When multiple users are assigned to the same arm at a given time step, we assume that they receive the same reward. This scenario particularizes to our setting when $C=1$. Since the optimal assignment is to round-robin the top $U/C$ arms, we can redefine the regret with $\mu_{*}=\mu_{1}+\dots+\mu_{U/C}$. We can show that a modified version of \EgalUCB{}, in which we split the horizon into $B=TC/U$ blocks and play the $U/C$ arms with the highest UCB value, achieves a regret of $R(T)\leq2136(K-U/C)\ln(T)\Deltamin^{-1}+KCU^{-1}\Deltamax$ where $\Deltamin$ and $\Deltamax$ are redefined by replacing $U$ by $U/C$. We conjecture that this bound may be improved by dividing the horizon into $C$ phases in which we eliminate all except $K/c$ arms in phase $c\in[C]$ and play the $U/c$ arms with the highest UCB values in a round-robin fashion. If one can prove that the top $U/c$ arms survive the elimination after each phase with high probability, it is possible to achieve a factor of $C^{-1}$ in the first term in the upper bound on $R(T)$.


\subsubsection*{Acknowledgments}

This research/project is supported by the National Research Foundation, Singapore under its AI Singapore Programme (AISG Award No: AISG2-PhD/2021-08-011). This research/project is also supported by a Ministry of Education Tier 2 grant under grant number A-9000423-00-00. 

\bibliography{main}
\bibliographystyle{tmlr}

\appendix
\onecolumn

\section{Notations}
\label{sec:notations}

\begin{table}[H]
    \centering
    \caption{Summary of notations}
	\label{tab:locations}
	\begin{tabular}{ll}
    \toprule
        $A_{b}$ & set of arms played during block $b$ \\
        $A_{t}$ & set of arms played during time step $t$ \\
		$A_{u,t}$ & arm played by user $u$ during time step $t$ \\
        $A_{*}$ & set of top $U$ arms \\
        $\Aminus$ & set of arms $A\backslash A_{*}$ \\
		$B$ & number of blocks \\
		$B_{a,b}$ & number of blocks arm $a$ is played up till block $b$ \\
		$K$ & number of arms \\
		$L_{b,i}$ & set of arms in $\Aminus_{b}$ where $B_{a,b-1}\leq m_{b,i}$ \\
        $N_{a}$ & number of sub-optimal sets played that includes $a$ \\
        $p_{a}$ & probability density of reward for arm $a$ \\
		$p_{\pi\nu}$ & probability density for the interaction between $\pi$ and $\nu$ \\
		$S_{u,t}$ & cumulative reward for user $u$ after $t$ time step \\
		$X_{t}$ & set of rewards obtained during time step $t$ \\
		$X_{a,t}$ & $t$-th reward obtained from playing arm $a$ \\
		$X_{u,t}$ & reward obtained by user $u$ during time step $t$ \\
		$R_{T}$ & regret after $T$ time steps \\
		$R_{\pi\nu}$ & regret of running $\pi$ on instance $\nu$ for $T$ time steps \\
		$T$ & time horizon \\
		$T_{a,t}$ & number of time steps arm $a$ is played up till time step $t$ \\
		$U$ & number of users \\
    \midrule
        $\EventE_{b}$ & event that $\mu_{a}$ is close to $\hat{\mu}_{a}$ for all $a\in[K]$ \\
        $\EventF_{b}$ & event that $A_{b}$ is sub-optimal but ``not too bad'' \\
        $\EventG_{b,i}$ & sub-event of $\EventF_{b}$ used in the regret analysis \\
        $\EventG_{b,i,a}$ & arm-dependent variant of $\EventG_{b,i}$ \\
        $\EventH$ & sets in $\BoGamma$ are played for at most $B/2$ times \\
        $\V$ & set of all 1-subgaussian \EgalMAB{} instances \\
    \midrule
        $\P_{a}$ & probability law of reward for arm $a$ \\
        $\P_{\pi\nu}$ & probability law for the interaction between $\pi$ and $\nu$ \\
    \midrule
        $\Delta_{A}$ & difference between $\mu_{*}$ and $\mu_{A}$ \\
        $\Deltamax$ & difference between $\mu_{U}$ and $\mu_{U+1}$ \\
        $\Deltamin$ & difference between $\mu_{1}+\dots+\mu_{U}$ and $\mu_{K-U+1}+\dots+\mu_{K}$ \\
        $\Lambda$ & set of arms $[K]\backslash A_{*}$ \\
    \midrule
        $\alpha$ & technical constant used for the proof \\
        $\beta$ & technical constant used for the proof \\
        $\epsilon_{b,b'}$ & confidence radius of playing $b'$ blocks after block $b$ \\
        $\gamma$ & technical constant used for the proof \\
        $\lambda$ & Lebesgue measure \\
        $\mu_{a}$ & expectation for distribution $\P_{a}$ \\
        $\mu_{A}$ & sum of the expected reward over the arms in $A$ \\
        $\mu_{*}$ & sum of the expected reward over the top $U$ arms \\
        $\hat{\mu}_{a,b}$ & empirical estimate of $\mu_{a}$ after playing $a$ for $b$ blocks \\
        $\nu$ & \EgalMAB{} instance \\
        $\pi$ & policy for \EgalMAB{} \\
        $\rho$ & counting measure \\
    \bottomrule
	\end{tabular}
\end{table}

\clearpage


\section{Detailed Algorithm}
\label{sec:detailed}

\begin{algorithm}[h]
\caption{\EgalUCB{} with implementation details}
\label{alg:EgalUCBDetailed}
\DontPrintSemicolon
initialize current block $b=0$\;
initialize current time step $t=0$\;
\ForEach{$a\in[K]$}{
    let number of blocks $B_{a,b}=0$\;
    let cumulative reward $S_{a,t}=0$\;
    let upper confidence bound $\UCB_{a,b}=\infty$\;
}
\While{$b\leq T/U$}{
    update $b=b+1$\;
    let $A_{b}\subseteqU[K]$ be a set of $U$ arms with highest $\UCB_{a,b-1}$\;
    let $\ind=(1,\dots,U)$\;
    \ForEach{$i\in[U]$}{
        update $t=t+1$\;
        \ForEach{$u\in[U]$}{
            let $A_{u,t}=A_{b}[\ind[u]]$\;
        }
        play $(A_{1,t},\dots,A_{U,t})$ and receive $(X_{1,t},\dots,X_{U,t})$\;
        \ForEach{$u\in[U]$}{
            let $a=A_{b}[\ind[u]]$\;
            let $S_{a,t}=S_{a,t-1}+X_{u,t}$\;
        }
        circular shift $\ind$ by one to the right
    }
    \ForEach{$a\in A_{b}$}{
        let $B_{a,b}=B_{a,b-1}+1$\;
    }
    \ForEach{$a\in[K]$}{
        let $\UCB_{a,b}=\dfrac{S_{a,t}}{B_{a,b}U}+\sqrt{\dfrac{6\ln(bU)}{B_{a,b}U}}$\;
    }
}
\end{algorithm}

\clearpage


\section{Proofs of Upper Bounds}
\label{sec:upperproofs}

This section contains the proof for the upper bounds.

\concentration*

\begin{proof}
Since the rewards are 1-subgaussian, we have
\begin{equation*}
    \P\bigl(|\hat{\mu}_{a,b'}-\mu_a|>\epsilon_{b,b'}\bigr)
    \leq2\exp\biggl(-\frac{1}{2}b'U\epsilon_{b,b'}^{2}\biggr)
\end{equation*}
for any $b'\in[b]$ due to Chernoff's bound. Then, by applying the union bound over all arms and all possible values of $B_{a,b-1}$, we have
\begin{align*}
    \P(\EventE_{b}^{\comp})
    &\leq\sum_{a=1}^{K}\sum_{b'=1}^{b}\P\bigl(|\hat{\mu}_{a,b'}-\mu_a|>\epsilon_{b,b'}\bigr)\\
    &\leq\sum_{a=1}^{K}\sum_{b'=1}^{b}2\exp\biggl(-\frac{1}{2}b'U\epsilon_{b,b'}^{2}\biggr)\\
    &\leq\sum_{a=1}^{K}\sum_{b'=1}^{b}\frac{2}{(bU)^3}\\
    &\leq\frac{2K}{b^2U^3}.\qedhere
\end{align*}
\end{proof}

\vspace{2em}

\eimpliesf*

\begin{proof}
Since $A_{b}$ is assumed to be sub-optimal, we already have $\Delta_{A_{b}}>0$. Denote $\Aminus_{*}$ to be the set $A_{*}\backslash A_{b}$. Observe that
\begin{equation*}
    \Delta_{A_{b}}
    =\sum_{a\in A_{*}}\mu_{a}-\sum_{a\in A_{b}}\mu_{a}
    =\sum_{a\in\Aminus_{*}}\mu_{a}-\sum_{a\in\Aminus_{b}}\mu_{a}.
\end{equation*}
since the terms that are associated to arms in $A_{b}\cap A_{*}$ cancel  out. Furthermore, since \EgalUCB{} chooses $A_{b}$ instead of $A_{*}$, we have
\begin{equation*}
    \sum_{a\in\Aminus_{b}}\hat{\mu}_{a,B_{a,b-1}}+\epsilon_{b-1,B_{a,b-1}}
    \geq\sum_{a\in\Aminus_{*}}\hat{\mu}_{a,B_{a,b-1}}+\epsilon_{b-1,B_{a,b-1}}.
\end{equation*}
Using these observations, we have
\begin{align*}
    \sum_{a\in\Aminus_{b}}\mu_{a}+2\epsilon_{b-1,B_{a,b-1}}
    &\geq\sum_{a\in\Aminus_{b}}\hat{\mu}_{a,B_{a,b-1}}+\epsilon_{b-1,B_{a,b-1}}\\
    &\geq\sum_{a\in\Aminus_{*}}\hat{\mu}_{a,B_{a,b-1}}+\epsilon_{b-1,B_{a,b-1}}\\
    &\geq\sum_{a\in\Aminus_{*}}\mu_{a}
\end{align*}
where the first and last inequality holds due to the assumption that $\EventE_{b}$ occurs. Rearranging this, we have
\begin{align*}
    \Delta_{A_{b}}
    =\sum_{a\in\Aminus_{*}}\mu_{a}-\sum_{a\in\Aminus_{b}}\mu_{a}
    &\leq2\sum_{a\in\Aminus_{b}}\epsilon_{b-1,B_{a,b-1}}\\
    &\leq2\sum_{a\in\Aminus_{b}}\epsilon_{B,B_{a,b-1}}
\end{align*}
where the last inequality holds because $b-1\leq B$.
\end{proof}

\upperboundi*

\begin{proof}
We begin by decomposing $R_{u,T}$ into
\begin{equation*}
    R_{u,T}=\frac{T\mu_{*}}{U}-\E[S_{u,T}]=\sum_{b=1}^{B}\mu_{*}-\E[\mu_{A_{b}}]=\sum_{b=1}^{B}\E[\Delta_{A_{b}}].
\end{equation*}
Since $\I\{\EventE\}+\I\{\EventE^{\comp}\}=1$ almost surely for any event $\EventE$, we can split
\begin{equation*}
    \sum_{b=1}^{B}\E[\Delta_{A_{b}}]
    =\underbrace{\sum_{b=1}^{B}\E[\Delta_{A_{b}}\I\{\EventE_{b}\}]}_{(\spadesuit)}
    +\underbrace{\sum_{b=1}^{B}\E[\Delta_{A_{b}}\I\{\EventE_{b}^{\comp}\}]}_{(\clubsuit)}.
\end{equation*}
To bound $(\spadesuit)$, we use $\smash{\I\{\Delta_{A_{b}}\!=0\}+\I\{\Delta_{A_{b}}\!>0\}}=1$ to get
\begin{equation*}
    \sum_{b=1}^{B}\E[\Delta_{A_{b}}\I\{\EventE_{b}\}\I\{\Delta_{A_{b}}\!=0\}]
    +\sum_{b=1}^{B}\E[\Delta_{A_{b}}\I\{\EventE_{b}\}\I\{\Delta_{A_{b}}\!>0\}].
\end{equation*}
Since $\smash{\Delta_{A_{b}}\I\{\Delta_{A_{b}}\!=0\}=0}$ almost surely, the first term is $0$. To deal with the second term, observe for any events $\EventE,\EventE',\EventF$, if $\EventE$ and $\EventE'$ implies $\EventF$, then $\I\{\EventE_{1}\cap\EventE_{2}\}=\I\{\EventE_{1}\}\I\{\EventE_{2}\}\leq\I\{\EventF\}$ almost surely. As such, we have
\begin{equation*}
    \sum_{b=1}^{B}\E[\Delta_{A_{b}}\I\{\EventE_{b}\}\I\{\Delta_{A_{b}}\!>0\}]
    \leq\sum_{b=1}^{B}\E[\Delta_{A_{b}}\I\{\EventF_{b}\}]
\end{equation*}
by Lemma~\ref{lem:eimpliesf}, thus concluding the proof for $(\spadesuit)$. To bound $(\clubsuit)$, observe that
\begin{equation*}
    \sum_{b=1}^{B}\E[\Delta_{A_{b}}\I\{\EventE_{b}^{\comp}\}]
    =\sum_{b=1}^{B}\E[\Delta_{A_{b}}|\EventE_{b}^{\comp}]\P(\EventE_{b}^{\comp}).
\end{equation*}
The expectation term can be bounded by
\begin{align*}
    \E[\Delta_{A_{b}}|\EventE_{b}^{\comp}]
    &=\sum_{A\subseteqU[K]}\P(A_{b}=A|\EventE_{b}^{\comp})\,\Delta_{A}\\
    &\leq\sum_{A\subseteqU[K]}\P(A_{b}=A|\EventE_{b}^{\comp})\,\Deltamax\\
    &=\Deltamax.
\end{align*}
Furthermore, we know that $\P(\EventE_{b}^{\comp})\leq2K/b^{2}U^{3}$ from Lemma~\ref{lem:concentration}. Substituting these results back, we have
\begin{align*}
    \sum_{b=1}^{B}\E[\Delta_{A_{b}}|\EventE_{b}^{\comp}]\P(\EventE_{b}^{\comp})
    &\leq\sum_{b=1}^{B}\Deltamax\cdot\frac{2K}{b^{2}U^{3}}\\
    &\leq\frac{2K\Deltamax}{U^{3}}\sum_{b=1}^{\infty}\frac{1}{b^{2}}\\
    &=\frac{\pi^{2}K\Deltamax}{3U^{3}},
\end{align*}
thus concluding the proof for $(\clubsuit)$.
\end{proof}

\fimpliesg*

\begin{proof}
It is clear by definition that at most one of $\{\EventG_{b,i}\}_{i}$ can happen. We are left to show that at least one of $\{\EventG_{b,i}\}_{i}$ must happen. Suppose that none of $\{\EventG_{b,i}\}_{i}$ happens. Thus $\smash{|L_{b,i}|<\beta^{i}U}$ for all $i\in\N$. First, we claim that all arms $a\in\Aminus_{b}$ are played at least once after block $K/U$. To see this, observe that the radius $\epsilon_{a,b}=\infty$ until $a$ is played; and after which $\epsilon_{a,b}<\infty$. This claim implies that there exist some sufficiently large $j\in\N$ such that the set $L_{b,j}=\emptyset$. Moreover, since $\{L_{b,i}\}_{i}$ is a non-increasing sequence of sets, we have that for all $b>K/U$, all arms $a\in\Aminus_{b}$ must lie in exactly one $L_{b,i-1}\backslash L_{b,i}$. Thus, we have
\begin{align*}
    \sum_{a\in\Aminus_{b}}\frac{1}{\sqrt{B_{a,b-1}}}
    &=\sum_{i=1}^{\infty}\sum_{a\in L_{b,i-1}\backslash L_{b,i}}\frac{1}{\sqrt{B_{a,b-1}}}\\
    &<\sum_{i=1}^{\infty}\sum_{a\in L_{b,i-1}\backslash L_{b,i}}\frac{1}{\sqrt{m_{b,i}}}\\
    &=\sum_{i=1}^{\infty}\frac{|L_{b,i-1}|-|L_{b,i}|}{\sqrt{m_{b,i}}}\\
    &=\frac{|L_{b,0}|}{\sqrt{m_{b,1}}}+\sum_{i=1}^{\infty}|L_{b,i}|\cdot\Biggl(\frac{1}{\sqrt{m_{b,i+1}}}-\frac{1}{\sqrt{m_{b,i}}}\Biggr)\\
    &<\frac{\beta^{0}U}{\sqrt{m_{b,1}}}+\sum_{i=1}^{\infty}\beta^{i}U\cdot\Biggl(\frac{1}{\sqrt{m_{b,i+1}}}-\frac{1}{\sqrt{m_{b,i}}}\Biggr)\\
    &=U\sum_{i=1}^{\infty}\frac{\beta^{i-1}-\beta^{i}}{\sqrt{m_{b,i}}}.
\end{align*}
Substituting $m_{b,i}$ into the inequality, we have
\begin{equation*}
    U\sum_{i=1}^{\infty}\frac{\beta^{i-1}-\beta^{i}}{\sqrt{m_{b,i}}}
    =U\sum_{i=1}^{\infty}\frac{\beta^{i-1}-\beta^{i}}{\sqrt{\gamma\alpha^{i}U\ln(BU)/\Delta_{A_{b}}^{2}}}
\end{equation*}
Rearranging the terms and evaluating the constants, we have
\begin{align*}
    U\sum_{i=1}^{\infty}\frac{\beta^{i-1}-\beta^{i}}{\sqrt{\gamma\alpha^{i}U\ln(BU)/\Delta_{A_{b}}^{2}}}
    &=\frac{1-\beta}{\beta}\sqrt{\frac{U}{\gamma\ln(BU)}}\sum_{i=1}^{\infty}\,\biggl(\frac{\beta}{\sqrt{\alpha}}\biggr)^{i}\cdot\Delta_{A_{b}}\\
    &<\sqrt{\frac{U}{24\ln(BU)}}\cdot\Delta_{A_{b}}.
\end{align*}
Since $\EventF_{b}$ happens, we have
\begin{equation*}
    \Delta_{A_{b}}
    \leq\sum_{a\in\Aminus_{b}}\sqrt{\frac{24\ln(BU)}{B_{a,b-1}U}}
    <\Delta_{A_{b}}
\end{equation*}
which is a contradiction. Hence, at least one of the events $\{\EventG_{b,i}\}_{i}$ must happen.
\end{proof}

\upperboundii*

\begin{proof}
For each arm $a\in[K]$, each index $i\in\mathbb{N}$, and each block $b\in[B]$, let
\begin{equation*}
\EventG_{b,i,a}=\EventG_{b,i}\cap\bigl\{a\in\Aminus_{b}\bigr\}\cap\bigl\{B_{a,b-1}\leq m_{b,i}\bigr\}
\end{equation*}
be an arm-dependent variant of the event $\EventG_{b,i}$. Since at least $\beta^{i}U$ arms satisfy $\EventG_{b,i,a}$ when $\EventG_{b,i}$ occurs, we have
\begin{equation*}
    \I\{\EventG_{b,i}\}\leq\frac{1}{\beta^{i}U}\sum_{a\in\Lambda}\I\{\EventG_{b,i,a}\}
\end{equation*}
where $\Lambda\coloneqq[K]\backslash A_{*}$ is the set of arms that are not in $A_{*}$. For each arm $a\in\Lambda$, let
\begin{equation*}
    N_{a}\coloneqq\bigl|\{A_{b}:\text{$b\in[B]$ and $a\in\Aminus_{b}$ and $\Delta_{A_{b}}\!>0$}\}\bigr|
\end{equation*}
be the number of distinct sub-optimal sets played that includes $a$, and let $A_{a,1},\dots,A_{a,N_{a}}$ be these sets sorted by non-ascending order of its sub-optimality gap. In other words, if we denote
\begin{equation*}
    \Delta_{a,j}\coloneqq\Delta_{A_{a,j}},
\end{equation*}
then $\Delta_{a,1}\geq\dots\geq\Delta_{a,N_{a}}$. Let $b_{0}=K/U+1$. Almost surely we have
\begin{align*}
    \sum_{b=b_{0}}^{B}\Delta_{A_{b}}\I\{\EventG_{b,i}\}
    &\leq\sum_{a\in\Lambda}\sum_{b=b_{0}}^{B}\frac{\Delta_{A_{b}}}{\beta^{i}U}\,\I\{\EventG_{b,i,a}\}\\
    &=\sum_{a\in\Lambda}\sum_{b=b_{0}}^{B}\sum_{j=1}^{N_{a}}\frac{\Delta_{a,j}}{\beta^{i}U}\,\I\{\EventG_{b,i,a}\}\I\{A_{b}=A_{a,j}\}\\
    &=\sum_{a\in\Lambda}\sum_{j=1}^{N_{a}}\frac{\Delta_{a,j}}{\beta^{i}U}\sum_{b=b_{0}}^{B}\,\I\{\EventG_{b,i,a}\}\I\{A_{b}=A_{a,j}\}
\end{align*}
We can upper bound this expression by considering the worst-case realization of the number of blocks each set $A_{a,j}$ is played. Let us start with $j=1$. Since $A_{a,1}$ has the largest gap $\Delta_{a,1}$, the worst case realization is when $A_{a,1}$ is played as many times as the event $\EventG_{b,i,a}$ allows. Recall that $\EventG_{b,i,a}$ implies that $B_{a,b-1}\leq m_{b,i}$. It follows that we can play $A_{a,1}$ for at most $\smash{\gamma\alpha^{i}U\ln(BU)/\Delta_{a,1}^{2}}$ blocks. We can use this argument to find the worst-case realization on the number of blocks $A_{a,2}$ is played. This works out to be
\begin{equation*}
    \frac{\gamma\alpha^{i}U\ln(BU)}{\Delta_{a,2}^{2}}-\frac{\gamma\alpha^{i}U\ln(BU)}{\Delta_{a,1}^{2}}
    =\gamma\alpha^{i}U\ln(BU)\cdot\Biggl(\frac{1}{\Delta_{a,2}^2}-\frac{1}{\Delta_{a,1}^2}\Biggr).
\end{equation*}
Repeating this argument, we have
\begin{align*}
    &\,\sum_{a\in\Lambda}\sum_{j=1}^{N_{a}}\frac{\Delta_{a,j}}{\beta^{i}U}\sum_{b=b_{0}}^{B}\,\I\{\EventG_{b,i,a}\}\I\{A_{b}=A_{a,j}\}\\
    \leq&\,\sum_{a\in\Lambda}\frac{\gamma\alpha^{i}\ln(BU)}{\beta^{i}}\cdot\Biggl(\frac{1}{\Delta_{a,1}}+\sum_{j=2}^{N_{a}}\Delta_{a,j}\,\Biggl(\frac{1}{\Delta_{a,j}^2}-\frac{1}{\Delta_{a,j-1}^2}\Biggr)\Biggr).
\end{align*}
The terms within the bracket can be further bounded by
\begin{align*}
    &\;\frac{1}{\Delta_{a,1}}+\sum_{j=2}^{N_{a}}\Delta_{a,j}\,\Biggl(\frac{1}{\Delta_{a,j}^2}-\frac{1}{\Delta_{a,j-1}^2}\Biggr)\\
    =&\;\frac{1}{\Delta_{a,N_{a}}}+\sum_{j=1}^{N_{a}-1}\frac{\Delta_{a,j}-\Delta_{a,j+1}}{\Delta_{a,j}^2}\\
    \leq&\;\frac{1}{\Delta_{a,N_{a}}}+\sum_{j=1}^{N_{a}-1}\frac{\Delta_{a,j}-\Delta_{a,j+1}}{\Delta_{a,j}\cdot\Delta_{a,j+1}}\\
    =&\;\frac{1}{\Delta_{a,N_{a}}}+\sum_{j=1}^{N_{a}-1}\,\Biggl(\frac{1}{\Delta_{a,j+1}}-\frac{1}{\Delta_{a,j}}\Biggr)\\
    <&\;\frac{2}{\Delta_{a,N_{a}}}.
\end{align*}
By combining these results, we have, almost surely, that
\begin{align*}
    \sum_{b=1}^{B}\Delta_{A_{b}}\I\{\EventF_{b}\}
    &=\sum_{b=1}^{b_{0}-1}\Delta_{A_{b}}\I\{\EventF_{b}\}+\sum_{b=b_{0}}^{B}\Delta_{A_{b}}\I\{\EventF_{b}\}\\
    &\leq\frac{K\Deltamax}{U}+\sum_{i=1}^{\infty}\sum_{b=b_{0}}^{B}\Delta_{A_{b}}\I\{\EventG_{b,i}\}\\
    &\leq\frac{K\Deltamax}{U}+\sum_{i=1}^{\infty}\sum_{a\in\Lambda}\frac{2\gamma\alpha^{i}\ln(BU)}{\beta^{i}\Delta_{a,N_{a}}}\\
    &\leq\frac{K\Deltamax}{U}+2\gamma\ln(BU)\sum_{a\in\Lambda}\frac{1}{\Delta_{a,N_{a}}}\sum_{i=1}^{\infty}\,\Bigl(\frac{\alpha}{\beta}\Bigr)^{i}\\
    &<\frac{K\Deltamax}{U}+2136\ln(BU)\sum_{a\in\Lambda}\frac{1}{\Delta_{a,N_{a}}}\\
    &\leq\frac{K\Deltamax}{U}+\frac{2136(K-U)\ln(BU)}{\Deltamin}
\end{align*}
where the last inequality holds because for all arms $a\in\Lambda$, we have $\smash{\Delta_{a,N_{a}}\!\geq\mu_{U}-\mu_{a}\geq\Deltamin}$. As such,
\begin{equation*}
    \sum_{a\in\Lambda}\frac{1}{\Delta_{a,N_{a}}}\leq\frac{K-U}{\Delta_{\min}}.\qedhere
\end{equation*}
\paragraph{Note.} Observe that when $U=1$, we have $\Delta_{a,N_{a}}=\Delta_{a}$. As such, we can replace the last inequality using
\begin{equation*}
    \ln(BU)\sum_{a\in\Lambda}\frac{1}{\Delta_{a,N_{a}}}
    =\sum_{a:\Delta_{a}>0}\frac{\ln(T)}{\Delta_{a}}
\end{equation*}
to obtain the bound for the classic \UCBi{} algorithm.
\end{proof}

\vspace{2em}

\upperbounddependent*

\begin{proof}
We can bound the regret by
\begin{align*}
    R_{u,T}
    &\leq\sum_{b=1}^{B}\E[\Delta_{A_{b}}\I\{\EventF_{b}\}]+\frac{\pi^{2}K\Deltamax}{3U^{3}}\\
    &\leq\frac{2136(K-U)\ln(BU)}{\Deltamin}+\frac{K\Deltamax}{U}+\frac{\pi^{2}K\Deltamax}{3U^{3}}\\
    &\leq\frac{2136(K-U)\ln(T)}{\Deltamin}+\frac{4K\Deltamax}{U}
\end{align*}
where the first inequality holds due to Lemma~\ref{lem:upperbound-part1} and the second inequality holds due to Lemma~\ref{lem:upperbound-part2}.
\end{proof}

\upperboundindependent*

\begin{proof}
Set
\begin{equation*}
    \delta=\sqrt{\frac{2136(K-U)\ln(BU)}{B}}.
\end{equation*}
Since $\I\{\Delta_{A_{b}}\!<\delta\}+\I\{\Delta_{A_{b}}\!\geq\delta\}=1$, we have
\begin{align*}
    \sum_{b=b_{0}}^{B}\E[\Delta_{A_{b}}\I\{\EventF_{b}\}]
    &=\sum_{b=b_{0}}^{B}\E[\Delta_{A_{b}}\I\{\EventF_{b}\}\I\{\Delta_{A_{b}}\!<\delta\}]\\
    &\hspace{25pt}+\sum_{b=b_{0}}^{B}\E[\Delta_{A_{b}}\I\{\EventF_{b}\}\I\{\Delta_{A_{b}}\!\geq\delta\}].
\end{align*}
We can trivially bound $\E[\Delta_{A_{b}}\I\{\EventF_{b}\}\I\{\Delta_{A_{b}}\!<\delta\}]$ by $\delta$. The second term can be bounded similarly as in Theorem~\ref{lem:upperbound-part2} by using $\Delta_{A_{b}}\geq\delta$ instead of $\Delta_{A_{b}}\geq\Deltamin$. Then, we have
\begin{align*}
    \sum_{b=b_{0}}^{B}\E[\Delta_{A_{b}}\I\{\EventF_{b}\}]
    &\leq B\delta+\frac{2136(K-U)\ln(BU)}{\delta}\\
    &=\sqrt{8544(K-U)\cdot B\ln(BU)}.
\end{align*}
Furthermore, since $\mu_{a}\in[0,1]$ for all $a\in[K]$, we have
\begin{equation*}
    \Deltamax\leq\min\{U,K-U\}.
\end{equation*}
To understand this bound, note that $\Deltamax\leq U$ generally, but when $K<2U$, this can be tightened because there are overlaps between the top $U$ arms and the bottom $U$ arms. This works out to $\Deltamax\leq K-U$. By considering the remaining terms in the regret, we have
\begin{align*}
    R_{T}
    &\leq\sqrt{8544(K-U)\cdot B\ln(BU)}+\frac{4K\Deltamax}{U}\\
    &\leq\sqrt{\frac{8544(K-U)\cdot T\ln(T)}{U}}+\frac{4K\min\{U,K-U\}}{U}.\qedhere
\end{align*}
\end{proof}


\clearpage

\section{Proofs of Lower Bound}
\label{sec:lowerproofs}

\lowerboundi*

\begin{proof}
Note that whenever we play some $A\notin\BoGamma$ under $\nu$, there will be at least $U/2$ users who will incur an instantaneous regret of at least $\Delta$. Under $\EventH$, the total number of sub-optimal arms played across all users and all time steps is at least $TU/4$. By the pigeonhole principle, we know that at least one user played sub-optimal arms for at least $T/4$ times. As such, the regret is at least $\Delta T/4$. A similar argument can be used to show that under $\nu'$ and $\EventH^{\comp}$, the regret is at least $\Delta T/4$. Thus
\begin{align*}
    R_{T,\pi,\nu}+R_{T,\pi,\nu'}
    &>\frac{\Delta T}{4}\bigl(\P_{\pi\nu}(\EventH)+\P_{\pi\nu'}(\EventH^{\comp})\bigr)\\
    &\geq\frac{\Delta T}{8}\exp\bigl(-\KL(\P_{\pi\nu}\|\P_{\pi\nu'})\bigr)
\end{align*}
where the last inequality holds due to the Bretagnolle--Huber inequality.
\end{proof}

\kldecompose*

\begin{proof}
Using the definition of the KL-divergence and applying the chain rule, we have
\begin{align*}
    \KL(\P_{\pi\nu}\|\P_{\pi\nu'})
    &=\E_{\pi\nu}\Biggl[\ln\Biggl(\frac{\dd\P_{\pi\nu}}{\dd\P_{\pi\nu'}}\Biggr)\Biggr]\\
    &=\E_{\pi\nu}\Biggl[\ln\Biggl(\frac{\dd\P_{\pi\nu}/\dd(\rho^{U}\times\lambda^{U})^{T}}{\dd\P_{\pi\nu'}/\dd(\rho^{U}\times\lambda^{U})^{T}}\Biggr)\Biggr].
\end{align*}
\clearpage
We then substitute the Radon-Nikodym derivatives and simplify the terms to get
\begin{align*}
    &\ \E_{\pi\nu}\Biggl[\ln\left(\frac{\dd\P_{\pi\nu}/\dd(\rho^{U}\times\lambda^{U})^{T}}{\dd\P_{\pi\nu'}/\dd(\rho^{U}\times\lambda^{U})^{T}}\right)\Biggr]\\
    =&\ \E_{\pi\nu}\Biggl[\ln\prod_{t=1}^{T}\frac{\pi(A_{t}|A_{1},X_{1},\dots,A_{t-1},X_{t-1})}{\pi(A_{t}|A_{1},X_{1},\dots,A_{t-1},X_{t-1})}\prod_{u=1}^{U}\frac{p_{A_{u,t}}(X_{u,t})}{p'_{A_{u,t}}(X_{u,t})}\Biggr]\\
    =&\ \E_{\pi\nu}\Biggl[\sum_{t=1}^{T}\sum_{u=1}^{U}\ln\frac{p_{A_{u,t}}(X_{u,t})}{p'_{A_{u,t}}(X_{u,t})}\Biggr]\\
    =&\ \sum_{t=1}^{T}\sum_{u=1}^{U}\E_{\pi\nu}\Biggl[\E_{\pi\nu}\Biggl[\ln\frac{p_{A_{u,t}}(X_{u,t})}{p'_{A_{u,t}}(X_{u,t})}\,\bigg\vert\,A_{u,t}\Biggr]\Biggr]\\
    =&\ \sum_{t=1}^{T}\sum_{u=1}^{U}\E_{\pi\nu}\bigl[\KL(\P_{A_{u,t}}\|\P'_{A_{u,t}})\bigr]\\
    =&\ \sum_{t=1}^{T}\E_{\pi\nu}\Biggl[\sum_{a\in A_{t}}\KL(\P_{a}\|\P'_{a})\Biggr]\\
    =&\ \sum_{t=1}^{T}\E_{\pi\nu}\Biggl[\sum_{A\subseteqU\mathcal[K]}\sum_{a\in A}\KL(\P_{a}\|\P'_{a})\cdot\I[A_{t}=A]\Biggr]\\
    =&\ \sum_{A\subseteqU\mathcal[K]}\sum_{a\in A}\KL(\P_{a}\|\P'_{a})\sum_{t=1}^{T}\E_{\pi\nu}\bigl[\I[A_{t}=A]\bigr]\\
    =&\ \sum_{A\subseteqU\mathcal[K]}\sum_{a\in A}\KL(\P_{a}\|\P'_{a})\cdot\E_{\pi\nu}[T_{A,T}]
\end{align*}

\noindent Note that the KL-divergence between two Gaussian measures with mean $\mu_1$ and $\mu_2$ and variance $1$ is $(\mu_1-\mu_2)^2$. Thus, we have
\begin{align*}
    &\ \sum_{A\subseteqU\mathcal[K]}\sum_{a\in A}\KL(\P_{a}\|\P'_{a})\cdot\E_{\pi\nu}[T_{A,T}]\\
    =&\ \sum_{A\subseteqU\mathcal[K]}\E_{\pi\nu}[T_{A,T}]\sum_{a\in A}\KL(\P_{a}\|\P'_{a})\\
    =&\ \sum_{A\subseteqU\mathcal[K]}\E_{\pi\nu}[T_{A,T}]\cdot 4\Delta^{2}|\{a'\in A'|a'\in A\}|\\
    =&\ 4\Delta^{2}\sum_{a'\in A'}\sum_{A:a'\in A}\E_{\pi\nu}[T_{A,T}]\\
    =&\ 4\Delta^{2}\sum_{a'\in A'}\E_{\pi\nu}[T_{a',T}].\qedhere
\end{align*}
\end{proof}

\vspace{2em}

\tatcombi*

\begin{proof}
Suppose, for sake of contradiction, that
\begin{equation*}
    \sum_{a\in A'}T_{a,T}>\frac{TU^{2}}{K-U}.
\end{equation*}
Note that since $A'$ is the set of least played arms, we have
\begin{align*}
    \sum_{A\subseteqU[K]\backslash[U]}\sum_{a\in A}T_{a,T}
    &>\sum_{A\subseteqU[K]\backslash[U]}\frac{TU^{2}}{K-U}\\
    &=\binom{K-U}{U}\frac{TU^{2}}{K-U}.
\end{align*}
Furthermore, the same quantity can be upper bounded by
\begin{align*}
    \sum_{A\subseteqU[K]\backslash[U]}\sum_{a\in A}T_{a,T}
    &=\sum_{a\in[K]\backslash[U]}\sum_{A:a\in A}T_{a,T}\\
    &=\sum_{a\in[K]\backslash[U]}\binom{K-U}{U}\frac{U}{K-U}T_{a,T}\\
    &=\binom{K-U}{U}\frac{U}{K-U}\sum_{a\in[K]\backslash[U]}T_{a,T}\\
    &\leq\binom{K-U}{U}\frac{U}{K-U}\sum_{a\in[K]} T_{a,T}\\
    &=\binom{K-U}{U}\frac{TU^2}{K-U},
\end{align*}
which is a contradiction.
\end{proof}

\lowerbound*

\begin{proof}
We have
\begin{align*}
R_{T,\pi,\nu}+R_{T,\pi,\nu'}
&\geq\frac{\Delta T}{8}\exp\Biggl(-4\Delta^{2}\sum_{a'\in A'}\E_{\pi\nu}[T_{a',T}]\Biggr)\\
&\geq\frac{\Delta T}{8}\exp\Biggl(-\frac{4\Delta^2TU^2}{K-U}\Biggr)\\
&=\frac{\Delta T}{8}\exp(-1/2)\\
&>\frac{\sqrt{T(K-U)}}{38U}.
\end{align*}
Since $2\max\{R_{T,\pi,\nu},R_{T,\pi,\nu'}\}\geq R_{T,\pi,\nu}+R_{T,\pi,\nu'}$, dividing by $2$ concludes the proof.
\end{proof}

\end{document}